\definecolor{pastelgreen}{rgb}{0.35, 0.75, 0.35}
\definecolor{pastelred}{rgb}{0.9, 0.3, 0.28}
\definecolor{pastelblue}{rgb}{0.35, 0.55, 0.75}
\newcommand{\cmark}{\textcolor{green!80!black}{\ding{51}}}
\newcommand{\xmark}{\textcolor{red}{\ding{55}}}
\DeclareMathOperator{\E}{\mathbb{E}}
\DeclareMathOperator{\tr}{tr}    
\DeclareMathOperator{\fisher}{\mathcal{I}}
\newtheorem{theorem}{Theorem}[section]
\newtheorem{lemma}[theorem]{Lemma}
\newtheorem{definition}[theorem]{Definition}
\newtheorem{remark}[theorem]{Remark}
\title{Heteroscedasticity of Denoising 
 Score Matching with Generalised Smooth Noise}
\author{%
  Juyan Zhang$^1$ \And Rhys Newbury$^1$ \And Xinyang Zhang$^2$ \AND
  Trin Tran$^1$ \And Dana Kuli\'c$^1$ \And Michael Burke$^1$ \\[0.2cm]
  $^1$Monash University \quad $^2$Amazon
}
\begin{document}

\maketitle

\begin{abstract}
  Score Matching (SM) is a powerful framework for estimating the log-density derivatives of a distribution without calculating its normalizing constants. This capability has made it a cornerstone across multiple domains, from classical statistical estimation and energy-based models to modern diffusion-based generative models. In practice, these models rely almost exclusively on Denoising Score Matching (DSM) as a tractable proxy for score matching. This ubiquity naturally raises a fundamental question: \emph{Is DSM truly "score matching for free"?} In this work, we demonstrate that DSM is not a perfect substitute. We prove that the denoising objective is inherently heteroscedastic,  the variance of model parameters fluctuates unpredictably based on both noise levels and the underlying data geometry. This instability is baked into the mathematical structure of the DSM. To address this, we derive an ideal weighting function that equalizes this variance, yielding a homoscedastic generalization of DSM. Since the ideal weights are often empirically inaccessible, we show that a practical approximation weighting function via Taylor expansion reduces gradient variance during training, at the cost of statistical optimality. Notably, this provides a theoretical justification for an existing heuristic weight used in Isotropic Gaussian Diffusion. We validate our theory across different perturbed distributions and for higher-order scores.
\end{abstract}

\section{INTRODUCTION}

Score matching, fitting a model to match the gradient of the data distribution rather than the densities themselves, provides a principled framework for estimating arbitrary order log-density derivatives (scores) without requiring access to normalizing constants. However, the true score of the data distribution is often inaccessible. Instead, modern generative modeling relies almost exclusively on Denoising Score Matching (DSM), which replaces the intractable objective with a tractable surrogate defined on noise-perturbed data. This has made it a central tool in modern generative modeling, particularly in energy~\citep{song2021train, swersky2011autoencoders} and diffusion-based models~\citep{song2020score, karras2022elucidating, sohl2015deep, ho2020denoising}.

The foundational work of \cite{vincent2011connection} proved that the expected gradients of the DSM and SM objectives are equivalent, leading to an implicit assumption that DSM is a "free" substitute. However, matching the average gradient is not enough; the variance of the loss affects training stability. Consequently, it remains unclear whether DSM faithfully preserves the statistical structure of score matching across more general settings, including higher-order score estimation and for arbitrary smooth noise distributions.
\begin{figure}
    \centering
    \includegraphics[width=0.75\linewidth, trim=0.2cm 0cm  0cm 0cm, clip]{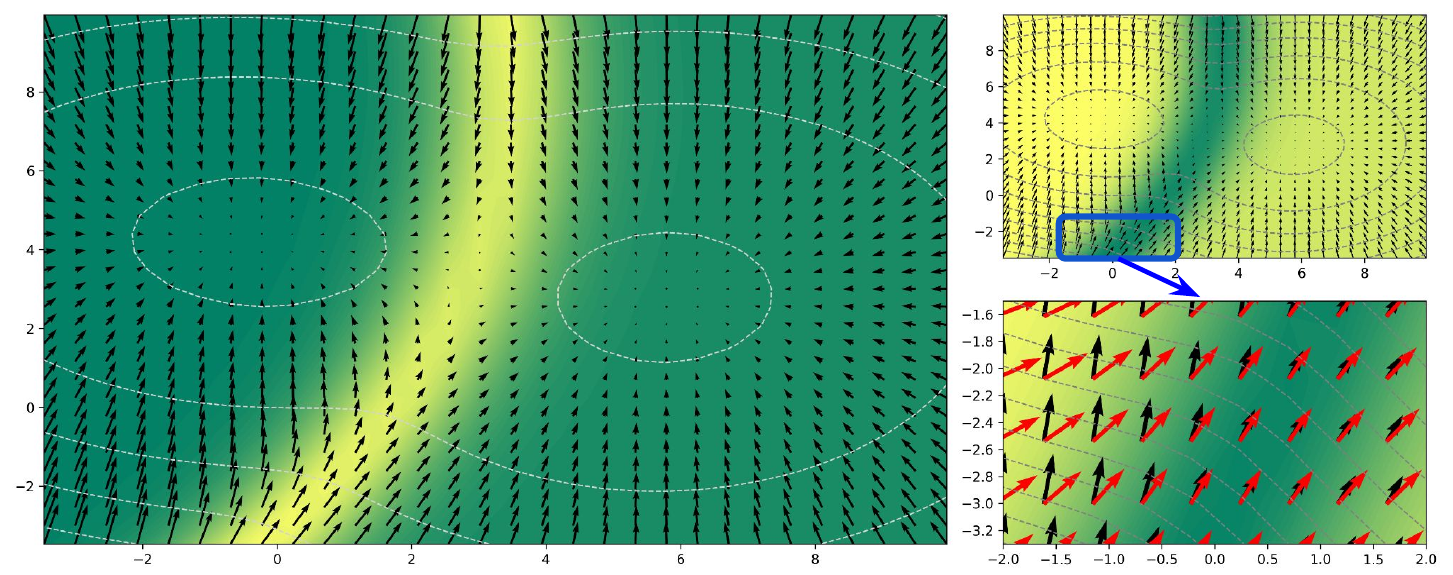}
    \caption{\textbf{Visualizing the heteroscedasticity and effect of Godambe weighting on a 2D Gaussian Mixture.} (Left) Variance of the true score field, with lighter colors indicating regions of higher variance (prominently along the boundary between components); (Top Right) The green region is systematically down-weighted by Godambe weighting; (Bottom Right) Magnified view of the weighting dynamics, comparing the true score field (black arrows) against the dominant eigenvector of the Godambe optimal weighting matrix (red arrows). The dominant eigenvector shows the direction in which the denoising targets have the lowest relative uncertainty. By aligning the learning signal with this direction, the weighting scheme prioritizes more reliable components of the score field. \vspace{-6mm}}
    \label{fig:leading figure}
\end{figure}
In this work, we revisit DSM from a statistical perspective and show that this commonly used surrogate introduces a fundamental and previously underappreciated effect: DSM is inherently heteroscedastic. That is, the variance of its regression targets depends strongly on both the noise level and the local geometry of the data distribution. As a result, different regions of the input space contribute unequally to the learning signal. As shown in~\cref{fig:leading figure} for a mixture of 2D Gaussians, points in convex density regions (e.g., between modes) have highly variable denoising targets, while other points have more consistent targets. We prove that this phenomenon is not an artifact of the model or parameterizations, but an intrinsic consequence of replacing score matching with conditional (denoising) targets. Ignoring this heteroscedasticity leads to statistically inefficient estimators and can adversely affect optimization dynamics~\citep{gourieroux1984pseudo, hullermeier2021aleatoric}.

Our analysis shows that this effect is unavoidable: even when the model class is sufficiently expressive, the denoising objective contains an irreducible variance term that does not appear in score matching. This establishes a fundamental statistical gap between the two formulations. From this perspective, DSM should not be viewed as a "\emph{free}" approximation to score matching, but rather as a different estimation problem with its own noise structure. To understand how this issue can be addressed, we derive the variance-optimal (Godambe) weighting that equalizes the contribution of different samples. This weighting renders the objective homoscedastic in principle and recovers the best linear unbiased estimator in ideal settings. However, this optimal solution depends on conditional variance terms (equivalently, higher-order information such as local curvature) that are generally intractable to compute in high-dimensional settings. This reveals a fundamental tension: DSM can be made statistically efficient in theory, but doing so requires quantities that are impractical to estimate.

Furthermore, we develop a practical approximation of this weighting function based on a first-order Taylor expansion. We show that this computationally efficient approximation partially mitigates heteroscedasticity. Crucially, we demonstrate that the ubiquitous $\sigma_t^2$ heuristic widely used by isotropic Gaussian diffusion models~\citep{song2020score, song2021maximum} emerges as a specific case of our analysis,  providing a theoretical explanation for its empirical success, while clarifying its limitations.

Finally, this work develops a general framework for arbitrary order denoising score matching under general smooth noise distributions, extending previous results that rely on Gaussian or exponential family assumptions. This framework makes explicit how heteroscedasticity arises across different settings and provides a common lens for analyzing existing methods.
Our core contributions are:
\begin{itemize}[leftmargin=0.5cm]
    \item \textbf{Intrinsic heteroscedasticity of DSM.}
    We show that denoising score matching induces input-dependent variance that depends on both the noise level and the data geometry, revealing a fundamental statistical difference from score matching (\cref{thm:loss-decomposition}).

    \item \textbf{Optimal weighting and its limits.}
    We derive the variance-optimal (Godambe) weighting that restores homoscedasticity, and show that it depends on intractable quantities, making exact correction impractical (Lemma \ref{lemma: weighting function}).

     \item \textbf{Theoretical grounding for existing heuristics.}
    Our framework generalizes across arbitrary score orders and general smooth noise distributions. We show that widely used heuristic weights ($\sigma_t^2$ in Isotropic Gaussian diffusion) emerge as specific instances of the optimal solution (Lemma \ref{lemma: practical approximation of weighting function}).

    \item \textbf{A fundamental tradeoff.}
    We derive a practical approximation of the weighting function and show that statistically optimal weighting can increase gradient variance, while the approximation improves optimization at the cost of statistical efficiency (Lemma \ref{lem: sandwich form for variance}).
   
\end{itemize}

\section{PRELIMINARIES}

\paragraph{Heteroscedasticity and weighted least squares:} Heteroscedasticity refers to the problem where the uncertainty of a model's output depends on input features. Failing to account for this input-dependent variance forces the model to fit all regions of the input space uniformly, regardless of their uncertainty, which can slow convergence and encourage overfitting~\citep{nix1994estimating,hullermeier2021aleatoric}. In nonlinear models such as neural networks, these effects can be more severe, potentially leading to biased and inconsistent parameter estimates~\citep{gourieroux1984pseudo}. Let $x \in \mathcal{X}$ denote an input, and let the corresponding model output be subject to random noise $\epsilon$. In the homoscedastic setting, the conditional variance of the noise is constant, 
\begin{equation}
    \mathbb{V}(\epsilon \mid x) = \sigma^2,
\end{equation}
whereas under heteroscedasticity, this variance depends on the input, 
\begin{equation}
    \mathbb{V}(\epsilon \mid x) = \sigma^2(x).
\end{equation}
A principled way to address heteroscedasticity uses a weighting function that is inversely proportional to the variance of the uncertainty~\citep{kiers1997weighted}. This ensures that samples with more noise contribute less to the loss, effectively equalizing variance across inputs. This arises naturally in weighted least squares, yielding the best linear unbiased estimator (BLUE) for linear (kernel) regression.

\paragraph{Pythagorean Decomposition of the $L_2$ Form:}
A key property of the multivariate squared-error loss is that it admits a bias–variance split in $L_2$.  In particular, 
\begin{definition}[Pythagorean $L_2$–Decomposition]\label{def:bias-var}
Let $x$ and $y$ be random variables, and let 
$f:\mathcal X\to\mathbb R^d$ and $g:\mathcal X\times\mathcal Y\to\mathbb R^d$ be measurable functions.  Then for each $x\in\mathcal X$,
\begin{align}
  &\mathbb E_{y| x}\bigl[\|f(x)-g(x,y)\|_2^2\bigr] \notag \\
  &= \underbrace{\|f(x)-\mathbb E_{y| x}[g(x,y)]\|_2^2}_{\text{bias}} + \text{tr} \underbrace{\mathbb{V}_{y| x}\bigl(g(x,y)\bigr)}_{\text{variance}}
\end{align}
Here $\mathbb E_{y| x}$ and $\mathbb V_{y| x}$ denote the conditional expectation and variance given $x$.
\end{definition}

\paragraph{First-order Score Matching and Denoising Score Matching:}
For first-order score estimation, score matching learns a parametric approximation
$s_\theta(x,t)$ of the score $\nabla_x \log p_t(x)$ by minimizing
\begin{equation}
\mathcal{L}_{\mathrm{SM}}(\theta)
=
\mathbb{E}_{p_t(x)}
\left[
\left\|
\nabla_x \log p_t(x) - s_\theta(x,t)
\right\|_2^2
\right].
\end{equation}
Since the score of the perturbed density $p_t(x)$ is generally intractable, denoising score matching instead replaces it with the conditional score $\nabla_{x_t}\log p(x_t\mid x_0)$ and optimizes
\begin{equation}
\mathcal{L}_{\mathrm{DSM}}(\theta)
=
\mathbb{E}_{p(x_0,x_t)}
\left[
\left\|
\nabla_{x_t}\log p(x_t\mid x_0) - s_\theta(x_t,t)
\right\|_2^2
\right].
\end{equation}

\cite{vincent2011connection} shows the link between first-order score matching and denoising score matching:
\begin{equation}
    \underbrace{\E_{p_t(x)}[\left\lvert\lvert \nabla_{x}\log p_t(x) - s_\theta(x, t) \right\rvert\rvert^2_2}_{\text{Score Matching}} 
    = \underbrace{\E_{p(x_0, x_t)}[ || \nabla_{x_t}\log p(x_t|x_0) - s_\theta(x_t, t)||^2_2]}_{\text{Denoising Score Matching}} + \text{Difference} \label{eq: first order dsm relationship}
\end{equation}

Because the "Difference" term is strictly independent of the model parameters $\theta$, it is concluded that the two objectives are equivalent in expectation.

\paragraph{Weighting in Denoising Score Matching:}
A particularly important instance of a weighting function arises in isotropic Gaussian diffusion models~\citep{song2021maximum}, which rely on denoising score matching. Here, when training using noise perturbation, the noise level is explicitly controlled by a variance schedule $\sigma^2(t)$ at each diffusion step $t$. To account for this, \cite{song2021maximum} applies a weighting function $w(t) \propto \sigma^2(t)$, motivated by an empirical analysis of the magnitude of score function.
This aims to balance the score magnitude across different noise levels~\citep{song2020score}. However, as shown below, this weighting scheme does not render the loss function homoscedastic.

\section{ANALYSIS}
To understand the heteroscedasticity of DSM, we first derive the connection between arbitrary order SM and DSM (\cref{sub_sec: DSM framework}). Building on this foundation, we demonstrate that this relationship is inherently heteroscedastic (\cref{sub_sec: Hetero in DSM}), which motivates the derivation of a Godambe-optimal weighting scheme applicable to arbitrary orders and smooth noise distributions. Finally, we introduce a practical approximation of this weighting function across various distributions and examine its relationship to commonly used heuristic weighting schemes (\cref{sub_sec: weighting function}). The corresponding proofs for all theorems and lemmas can be found in Appendix \ref{sec: proof}.

\subsection{Arbitrary order DSM with generalized Noise}
\label{sub_sec: DSM framework}

We now present an iterative recursion for arbitrary–order score functions (\cref{thm:iterative-score}), which is used to show heteroscedasticity is a structural property of DSM that holds beyond first-order settings.

We start with the first and second order functions introduced in~\cite{lu2022maximum}. Although introduced in the context of a Gaussian perturbed kernel, this derivation is also applicable to other distributions.

\begin{definition}\label{def:first-order-score}
A Monte Carlo estimator of the first-order score function, $s_1(x_t)$, is 
$$
  s_1(x_t)
  = \E_{x_0| x_t}\bigl[s_1(x_t| x_0)\bigr],
$$
where $s_1(x_t| x_0) = \nabla_{x_t}\log p(x_t| x_0)$.

Differentiating both sides with respect to $x_t$ yields the second-order score function $s_2(x_t)$
\begin{align}
    s_2(x_t) &= \nabla_{x_t}\E_{x_0| x_t}\bigl[s_1(x_t| x_0)\bigr] \notag \\
    &= \E_{x_0| x_t}\Bigl[s_1(x_t| x_0)s_1(x_t| x_0)^\top
    -s_1(x_t| x_0)s_1(x_t)^\top + s_2(x_t| x_0)\Bigr]
\end{align}
where the conditional second‐order score is
$$
  s_2(x_t| x_0)
  =
  \nabla_{x_t}^2\log p(x_t| x_0).
$$
\end{definition}

\begin{remark}
    The estimator for $s_2(x_t)$ is found by taking derivatives on both sides of the Monte Carlo estimator of the first-order score function. Then $\nabla_{x_t} \log p(x_0|x_t)$ can be further expanded using Bayes' rule at the score level to obtain the final equation. The proof can be found in \cref{prf: first-order-score}.
\end{remark}

Given the first-order score identity, we can substitute this into the second-order identity, which yields multiple variants of the Monte Carlo estimate of the second-order score function. A brief analysis of these estimators is provided in \cref{sec: analysis of second order estimator} in terms of their bias and variances.

We now introduce the iterative form for an arbitrary order score function.

\begin{theorem}\label{thm:iterative-score}
Given the data distribution $p(x_0)$ and the noising distribution  $p(x_t| x_0)$, the $k$th‐order score is defined as 
$$
  s_k(x_t)=\nabla_{x_t}^k\log p(x_t).
$$
Initializing
$$
  h_1(x_0,x_t)=s_1(x_t| x_0)
$$
Then for all $k\ge2$:
$$
  s_k(x_t)
  =
  \E_{x_0| x_t}\bigl[h_k(x_0,x_t)\bigr]
$$
where
\begin{align*}
  &h_{k}(x_0,x_t) = \nabla_{x_t} h_{k-1}(x_0, x_t) + h_{k-1}(x_0, x_t) \otimes \bigl(s_1(x_t|x_0) - s_1(x_t) \bigr)
\end{align*}
\end{theorem}

\begin{remark}
In the first‐order Gaussian case, 
$
  h_1(x_0,x_t)=s_1(x_t| x_0),
$
which is simply the score of the perturbed kernel with Gaussian perturbations $x_t| x_0\sim\mathcal N(x_0,\sigma_t)$).  More generally, at order $k=K$ the function
$
  h_K(x_0,x_t)
$ collects the conditional score $s_K(x_t| x_0)$, which remains known analytically, and the previously computed scores $s_1(x_t),\dots,s_{K-1}(x_t)$.  Solving the recursion in Theorem \ref{thm:iterative-score} then yields the new marginal score $s_K(x_t)$. The proof can be found in~\cref{Prf: thm:iterative-score}
\end{remark}

\citet{meng2021estimating} use Tweedie’s formula to derive arbitrary‐order denoising score matching. However, Tweedie's formula is restricted to perturbed kernels $p(x_t| x_0)$ in the exponential family. \citet{lu2022maximum} provides explicit formulas for second and third‐order scores, but these are discussed in the context of Gaussian noise. Similarly, \citet{manor2023posterior} provide an iterative formula to derive higher-order central moments under the assumption of the noise distribution being Gaussian. The theorems above provide a general formula without exponential family noise distribution assumptions.

To summarize, Theorem \ref{thm:iterative-score} above provides a general and iterative framework for obtaining the $k$th‐order score function with smoothness assumptions on $p(x_t| x_0)$. 

\subsubsection{Loss Function for arbitrary order Denoising Score Matching}

We now show that the score-matching loss admits a variance-based decomposition, which clarifies its relationship to the denoising score-matching loss. To make this precise, we analyze how replacing the true score with conditional targets affects the loss. In particular, we show that this substitution introduces an additional variance term that does not appear in standard score matching.

\begin{theorem}\label{thm:loss-decomposition}
Assume that for each $x_t \in \mathcal{R}^{d}$ where d is the dimensionality of the samples, the function $h_k(x_0,x_t)$ satisfies
$$
s_k(x_t)=\E_{x_0| x_t}\bigl[h_k(x_0,x_t)\bigr],
$$
Let us define the weighted score matching loss $\mathcal{L}_{\text{sm}}$ and the weighted denoising score matching loss $\mathcal{L}_{\text{dsm}}$ using an $L_2$ norm\footnote{Pointwise convergence with $L_2$ norm~\citep{ziemer2017modern}}:
\begin{align}
 \mathcal{L}_{sm}  
&=\E_{p(x_t)}\Big(\big(s_k(x_t;\theta)-s_k(x_t)\big)^\top W \big(s_k(x_t;\theta)-s_k(x_t)\big)\Big) \label{eq: score matching loss} \\
 \mathcal{L}_{dsm}  
&=\E_{p(x_t,x_0)}\Big(\big(s_k(x_t;\theta)-h_k(x_0,x_t)\big)^\top W 
\big(s_k(x_t;\theta)-h_k(x_0,x_t)\big)\Big) \label{eq: denoising score matching loss}
\end{align}
Using definition \ref{def:bias-var}, these are connected as follows
$$
\mathcal{L}_{sm}
=\mathcal{L}_{dsm}
-\tr\E_{p(x_t)}\bigl[W\mathbb{V}_{x_0| x_t}\bigl(h_k(x_0,x_t)\bigr)\bigr].
$$
\end{theorem}

\begin{remark}
When the model $s_k(x_t;\theta)$ has enough capacity and converges perfectly, the primary score‐matching loss $\mathcal L_{sm}$ goes to zero. However, the denoising variant $\mathcal L_{dsm}$ cannot go all the way to zero; its best‐case value is
$ \tr\E_{p(x_t)}\bigl[WV_{x_0| x_t}\bigl(h_k(x_0,x_t)\bigr)\bigr]$, 
reflecting the irreducible conditional‐variance term if $W$ is constant with respect to $x_t$ and $t$. This term represents a fundamental limitation: even with a perfect model, the DSM objective cannot eliminate this variance. The detailed proof can be found in~\cref{prf: loss-decomposition}

When $k=1$, we recover the relationship established by \citet{vincent2011connection}, as shown in \cref{eq: first order dsm relationship}.
\end{remark}

\subsection{Heteroscedasticity in Denoising Score Matching}
\label{sub_sec: Hetero in DSM}


Theorem~\ref{thm:loss-decomposition} may appear unintuitive, since its final irreducible term is strictly independent of the model parameters. However, this term is precisely where heteroscedasticity manifests in DSM: it reveals that the variance of the loss fluctuates dynamically based on both the input noise level and the local data geometry. To clarify its practical impact, we further analyze how this input-dependent variance propagates into both the parameter gradients and the model parameters themselves, which eventually influences the learning dynamics.

\begin{definition}
    Given the denoising score loss defined in \cref{eq: denoising score matching loss} and the parametrized model $s_k(x_t; \theta)$, the corresponding gradient $g$ of the loss function is defined as 
    \begin{align}
        g_{\text{dsm}}(\theta, \sigma_t)
        &= \partial_\theta \mathcal{L}_{\text{dsm}} \notag \\
        &= 2\E_{x_0, x_t} \Big( J(x_t; \theta)^\top W \big(s_k(x_t; \theta) - h_k(x_0, x_t)\big)\Big) \label{eq: gradient function of dsm}
    \end{align}
    where $J(\theta)$ is the Jacobian matrix of the score function with respect to $\theta$, i.e. $J(\theta) = \partial_\theta s(x_t; \theta)$
\end{definition}



\begin{lemma}\label{lem: sandwich form for variance}
    Given the model has enough capacity and the optimal parameters $\theta^{*}$, the variance of the model parameters $\theta$ and the variance of gradient of parameters $g_{dsm}$ are 
    \begin{align}
        \begin{cases}
            \mathbb{V}(\theta)
            &= \E_{x_t} \Big(S(x_t, \theta^{*}) \notag G(x_t, t)S(x_t, \theta^{*})^\top\Big) \\
            \mathbb{V}(g_{\text{dsm}}) &=4 \E_{x_t} \left(J(x_t; \theta)^\top G(x_t, t)J(x_t; \theta) \right) 
        \end{cases} \label{eq: variances}
    \end{align}
    where 
    \begin{align}
        \begin{cases}
        &H_{\mathcal{L}}( x_t, W ;\theta^{*}) = 2J^\top(x_t; \theta^{*}) W J(x_t; \theta^{*}) \\
        &S(x_t, \theta^{*}) = H_{\mathcal{L}}( x_t, W ;\theta^{*})^{-1}J(x_t; \theta^{*})^\top  \\
        &G(x_t, t) = \E_{x_0|x_t}\bigg(W V_{x_0|x_t}(h_k(x_t, x_0))W^\top)\bigg) 
    \end{cases}
    \end{align}
\end{lemma}

\begin{remark}
    Equation \ref{eq: variances} explicitly isolates the source of this heteroscedasticity. At convergence, the model-dependent term $S(x_t, \theta^{*})$ is fixed. The heteroscedasticity is instead driven entirely by the matrix $G$, specifically through the irreducible variance term $V_{x_0|x_t}(h_k)$. Because this term depends dynamically on the input $x_t$ and the noise level $t$ and is strictly independent of the model parameters $\theta$, it acts as an unavoidable, underlying noise floor. When this variance is large, it directly destabilizes both the parameter gradients and the model parameters themselves. Consequently, the irreducible variance in $G$ represents the fundamental "statistical cost" of using Denoising Score Matching as a tractable surrogate for true Score Matching. The detailed proof can be found in~\cref{prf: sandwich form for variance}
\end{remark}


%




\subsection{Godambe-optimal Weighting Function for arbitrary order DSM and Its approximation}
\label{sub_sec: weighting function}

Having identified the source of heteroscedasticity, where different noise levels $t$ and $x_t$ contribute unequally to the DSM loss, the Godambe framework~\cite{godambe1991estimating} provides the ideal weights that cancel out this effect: it rescales the 
conditional variance of the noisy score estimate $h_k(x_0,x_t)$ so that, on average, each $x_t$ contributes equally. 

\begin{lemma} \label{lemma: weighting function}
    Formally, the Godambe-optimal weight matrix $W$ is chosen to satisfy
    \begin{equation}
        W \mathbb{V}_{x_0|x_t}\left[h_k(x_0, x_t)\right] = I
    \end{equation}
    \begin{equation}
    \therefore W = \mathbb{V}_{x_0|x_t}\left[h_k(x_0, x_t)\right]^{-1}
    \end{equation} 

\end{lemma}

\begin{remark}
Lemma \ref{lemma: weighting function} gives the Godambe-optimal weights for arbitrary denoising score matching with generalized smooth noise. When the model is parameterized as a linear regression of different features of $(x_t, t)$, it can be shown that the ideal weight matrix leads to the Best Linear Unbiased Estimator (BLUE), where the best refers to the estimator with minimum parameter variance. We refer the reader to Appendix \ref{prf: hetero analysis} for the proof.
\end{remark}


This gives the ideal weighting function for arbitrary order score matching for any smooth noise distribution when using denoising score matching as a proxy for score matching.

\paragraph{Approximation of the Statistical Optimal Weighting Function}
\begin{lemma}[Approximation of Optimal Weight Function] \label{lemma: practical approximation of weighting function}
    For arbitrary denoising score matching, the expected Godambe-optimal weight matrix $W(t)$ can be reduced to the following form with first-order Taylor expansion:
    \begin{align}
        W &= \E_{x_t} \left[ \mathbb{V}_{x_0|x_t}\left[h_k(x_0, x_t)\right] \right]^{-1} \\
        &= \left(\E_{x_0} \fisher_{x_t|x_0}^{(k)} \right)^{-1} + \mathcal{O}\left( \big\| (\fisher_{x_t|x_0}^{(k)})^{-1} \big\|^2 \right) \\
        &\approx \left(\E_{x_0} \fisher_{x_t|x_0}^{(k)} \right)^{-1} \label{eq: approxmiation weighting function}
    \end{align}
    where $\fisher^{(k)}_{x_t|x_0}$ is the $k$th order Fisher information tensor of the perturbation kernel $p(x_t|x_0)$.
\end{lemma}

\begin{remark}
    In practice, most Gaussian DSM choose a noising distribution whose Fisher information does not depend on the input data $x_0$. For example, under isotropic Gaussian noise where $p(x_t|x_0) = \mathcal{N}(x_t; x_0, \sigma_t^2 I)$, the $k$th order Fisher information simplifies to  $\fisher^{(k)}_{x_t|x_0} = \sigma_t^{-2k} I$. The detailed proof can be found in~\cref{prf: practical approx}
\end{remark}

Interestingly, this approximation can achieve lower variance of model parameter gradients $g_{\text{dsm}}$ than the statistical optimal weighting despite achieving higher variance of model parameters $\theta$. 

\begin{lemma}[Variance with weighting functions] \label{variance properties of approximated weights}

 The \emph{\textbf{variance of parameters}} using the ideal weight matrix $W = \mathbb{V}_{x_0|x_t}(h_k(x_0, x_t))^{-1}$ is strictly less than or equal to that of the approximation
    $$\mathbb{V}(\theta)_{\text{ideal}} \leq \mathbb{V}(\theta)_{\text{approx}}$$

    However, the \emph{\textbf{variance of the gradient}} of the parameters using the ideal weights is larger than that obtained when using the approximation in expectation.
    $$\E\big(\mathbb{V}(g_{dsm})_{\text{ideal}}\big)  > \E\big(\mathbb{V}(g_{dsm})_{\text{approx}}\big)$$
\end{lemma}

The detailed proof can be found in~\cref{prf: variance properties of approx}

\section{EVALUATIONS}

\paragraph{Beyond First-order Scores:} We evaluate the convergence behavior of second-order score estimation on a Gaussian Mixture Model (GMM) where computation of the ideal weight is tractable. As shown in \cref{fig: 2nd experiment}, the unweighted baseline (No Weight) suffers from severe instability and poor convergence across most noise levels. By contrast, incorporating weighting schemes tailored for heteroscedasticity drastically reduces gradient variance and accelerates convergence. Furthermore, while the Taylor-approximated weights successfully stabilize early training, they exhibit late-stage divergence in low-noise regimes. The ideal weighting scheme completely prevents this divergence.

\begin{figure}[htb]
    \centering
    \includegraphics[width=0.95\linewidth, trim=0cm 10cm 0cm 0cm, clip]{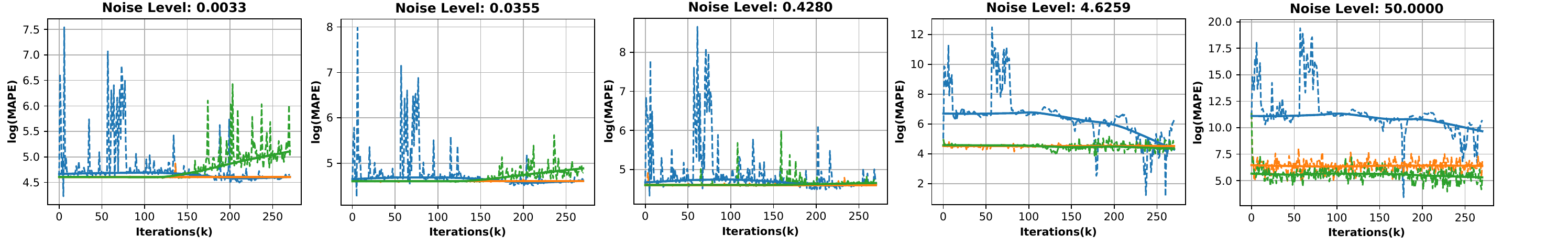}
    \vspace{2pt} 
    {\footnotesize \color{pastelblue}--- \textbf{No Weight}} \quad {\footnotesize \color{pastelgreen}--- \textbf{Approximation}} \quad {\footnotesize \color{pastelred}--- \textbf{Ideal Weight}}
    
    \caption{\textbf{Convergence of second-order DSM on a GMM for varying noise levels.} The plots compare the $\log(\text{MAPE})$ over training iterations for the unweighted baseline,  proposed Taylor approximation, and the ideal weighting. While the unweighted objective exhibits extreme variance, both weighting strategies successfully stabilize training. Notably, the ideal weight bounds the error, preventing the late-stage divergence observed with the approximated weights at lower noise levels.}
    \label{fig: 2nd experiment}
    \vspace{-5mm}
\end{figure}

\paragraph{Beyond Gaussian Diffusion:}
To evaluate the training quality and verify that the proposed approximate weighting scheme reduces heteroscedasticity, we measure the Mean Absolute Percentage Error (MAPE) against the exact, analytical true score. Because deriving this ground-truth score is mathematically intractable for complex, high-dimensional data, our empirical validation is necessarily restricted to simpler distributions where the true score is known in closed form. We evaluate how our approximate weight (Lemma \ref{lemma: practical approximation of weighting function}) improves performance on a 1D Mixture of Cauchy dataset corrupted by various noise families. \cref{tab:cauchy_experiment} shows that naive training without weights results in severe performance degradation, particularly at extreme noise levels. In general, applying the derived first-order optimal weights substantially reduces the MAPE of the predicted score.

\begin{table}[!htb]
\centering
\caption{Performance (MAPE) of predicted score vs. true score on the 1D Mixture of Cauchy dataset noised with different noise distributions\protect\footnotemark. Lower values are better.}
\vspace{5pt}
\label{tab:cauchy_experiment}
\resizebox{0.8\columnwidth}{!}{%
\begin{tabular}{lccccc}
    \toprule
    \textbf{Noise Distribution} & \textbf{Weights} & 0.12 & 1.4 & 15 & 50 \\
    \midrule
    Cauchy$(0, \gamma_t)$ & No Weights & \textbf{1.20$\pm$0.10} & 1.40$\pm$0.29 & 7.84$\pm$4.96 & 19.95$\pm$15.36 \\
     & 1st-Order Approx. & 3.83$\pm$1.26 & \textbf{0.48$\pm$0.21} & \textbf{0.30$\pm$0.06} & \textbf{0.72$\pm$0.25} \\
    \midrule
    Gen.\ Normal$(0, \alpha_t, \beta_t)$ & No Weights & 1.96$\pm$0.62 & 3.93$\pm$2.52 & 15.32$\pm$10.91 & 7.75$\pm$1.28 \\
     & 1st-Order Approx. & \textbf{1.91$\pm$0.53} & \textbf{0.31$\pm$0.09} & \textbf{0.19$\pm$0.09} & \textbf{0.35$\pm$0.18} \\
    \midrule
    Laplace$(0, \gamma_t)$ & No Weights & \textbf{1.25$\pm$0.31} & 3.42$\pm$1.55 & 6.37$\pm$3.77 & 8.93$\pm$1.56 \\
     & 1st-Order Approx. & 1.27$\pm$0.06 & \textbf{0.51$\pm$0.14} & \textbf{0.14$\pm$0.02} & \textbf{0.27$\pm$0.02} \\
    \midrule
    \text{Student} $t_3(0, \gamma_t)$ & No Weights & 2.20$\pm$1.07 & 4.82$\pm$2.48 & 17.64$\pm$12.28 & 74.75$\pm$35.47 \\
     & 1st-Order Approx. & \textbf{1.40$\pm$0.44} & \textbf{0.35$\pm$0.19} & \textbf{0.21$\pm$0.07} & \textbf{0.38$\pm$0.16} \\
    \bottomrule
\end{tabular}
}
\end{table}

\footnotetext{The approximated weighting functions are detailed in \cref{app: cachy_exp_settings}}





\paragraph{Alternative Weighting Function Approximations:}
The first order weighting function approximation above disregards curvature information necessary for statistical efficiency, ignoring the underlying geometry of the data. 
Unfortunately, computing the Hessian for high-dimensional data is prohibitively expensive, scaling as $\mathcal{O}(D^2)$. One way to circumvent this computational bottleneck is computing the trace of Hessian instead of the full Hessian. Below, we approximate the trace of the Hessian of the marginal log-density using the divergence of the learned score model, estimated with Hutchinson probes \citep{hutchinson1989stochastic}. We tested this second-order approximation on standard generative datasets. As demonstrated in \cref{tab:hutchinson_experiment}, incorporating these scalar curvature approximations provides minor improvements in  distribution modeling over the first-order approximation.
\begin{table}
    \centering
    \caption{Learned score qualities (MAPE of predicted score vs target, lower is better) on various datasets for a Gaussian diffusion application. }
    \vspace{5pt}
    \resizebox{0.95\columnwidth}{!}{\begin{tabular}{lcccccc}
        \toprule
        \textbf{Weighting} & \textbf{Spiral} & \textbf{Moons} & \textbf{GMM} & \textbf{Swissroll} & \textbf{MNIST} & \textbf{CIFAR-10}  \\
        \midrule
        No Weighting & $4.02 \pm 0.06$ & $4.04 \pm 0.05$ & $3.94 \pm 0.05$ & $3.99 \pm 0.04$ & $4.10 \pm 0.0001$ & $4.10 \pm 0.0001$\\
        First-Order Approximation & $3.97 \pm 0.01$ & $3.95 \pm 0.02$ & $3.93 \pm 0.02$ & $3.95 \pm 0.01$ & $4.00 \pm 0.0002$ & $4.10 \pm 0.0001$\\
        Hutchinson Estimator & $\mathbf{3.96 \pm 0.05}$ & $\mathbf{3.94 \pm 0.01}$ & $\mathbf{3.91 \pm 0.01}$ & $\mathbf{3.94 \pm 0.01}$ & $\mathbf{3.99 \pm 0.0003}$ & $\mathbf{4.10 \pm 0.000}$\\
        \bottomrule
    \end{tabular}%
    }
    \label{tab:hutchinson_experiment}
\end{table}
This reflects an inherent difficulty: accurately estimating curvature in high dimensions is computationally expensive, and low-rank or scalar approximations fail to capture the full variance structure. 

\paragraph{Empirical Analysis of Model Parameter Variances:}
\cref{fig:variance of parameter gradients dim 10} shows that the variances of the gradients of model parameters for ideal and first-order approximate weighting functions are similar when $\sigma_t$ is low, but heuristic weighting achieves significantly lower variance than the ideal weights.

\begin{figure}[!htb]
    \centering
    \vspace{-4mm}
    \resizebox{0.8\columnwidth}{!}{
        \input{figs/model_size_10}
    }
    \vspace{-4mm}
\caption{Variance of gradients of model parameters: As $\sigma_t$ becomes larger, approximated weights have much lower variance than the Godambe Ideal Weights. $std$ denotes the standard deviation.}
    \label{fig:variance of parameter gradients dim 10}
     \vspace{-6mm}
\end{figure}

\section{DISCUSSION AND LIMITATIONS}

\paragraph{Statistical vs Optimization tradeoff} Our analysis shows that denoising score matching is not merely a noisy approximation of score matching, but a fundamentally different estimation problem with input-dependent variance. While a variance-optimal correction exists in principle, it depends on intractable quantities. Practical training objectives necessarily operate under a tradeoff between statistical efficiency and optimization stability.
The variance optimal weighting minimises the asymptotic variance of the estimator, but can increase the variance of stochastic gradients during training. In contrast, heuristic weightings reduce gradient variance and improve optimization stability, but yield statistically suboptimal estimators. This tradeoff suggests that improving DSM requires balancing these two objectives, rather than optimizing either in isolation. In practice, this means that weights that emphasize statistically reliable regions can amplify noisy gradients during training, while weights that stabilize optimization may underutilize informative samples.

\paragraph{Implications for Gaussian diffusion}
In standard score-based diffusion models, the noise distribution is typically isotropic Gaussian. Under this assumption, our analysis shows that the commonly used $\sigma_t^2$ heuristic weighting aligns directly with the proposed approximation. We believe that this explains why this simple heuristic weighting scheme performs well in practice: it implicitly captures the dominant component of the variance. However, our analysis also clarifies the scheme's limitations. 
It does not fully correct the heteroscedasticity of DSM, and its effectiveness depends on the structure of the data and noise, e.g. particularly in distributions with high local concavity, where a scalar weighting fails to capture the complex local geometry of the score field.

\paragraph{Beyond first-order Gaussian settings}
Higher-order scores capture uncertainty and curvature information that first-order scores cannot~\citep{meng2021estimating}, leading to improved likelihood estimation through better-conditioned objectives~\citep{lu2022maximum}. Moreover, higher-order scores have the potential to significantly accelerate sampling algorithms in diffusion models~\citep{dockhorn2022genie, meng2021estimating}. As research in this specific domain remains largely theoretical, the primary utility of our unified weighting approximation is to stabilize the training of these higher-order models to allow practical integration into advanced sampling algorithms~\citep{dockhorn2022genie} in future work.

Non-Gaussian diffusions often outperform standard Gaussian models on complex topologies like long-tailed data~\citep{deasy2021heavy, pandey2024heavy}. Our framework provides a principled method for deriving weighting functions for these models, filling an important gap where heuristic strategies are currently absent. However, because non-Gaussian noise can introduce discontinuities in SDE sampling algorithms, translating these stabilized training objectives into efficient generation will require the development of specialized, robust samplers in future work.

\section{RELATED WORK}
The standard objective used to train continuous-time diffusion models enforces consistency between the marginal distributions parameterized by the neural network and those of the analytical reverse chain~\citep{song2021maximum}. This yields an idealized score matching loss that includes a weighting function designed specifically to ensure this marginal consistency, a role extensively explored through the lens of the ELBO and likelihood optimization~\citep{kingma2021variational, kingma2023understanding, song2021maximum}. 
 Since the true score is generally inaccessible, optimization relies on Denoising Score Matching (DSM) as a tractable surrogate. As we have shown, using this proxy intrinsically alters the optimization landscape by introducing input-dependent estimation variance (heteroscedasticity). Consequently, the practical objective requires a  \textit{different} weighting factor that additionally corrects for the proxy variance. 
Prior timestep-dependent weighting schedules have focused almost exclusively on matching the diffusion trajectory, relying on heuristic designs~\citep{kumar2025loss, gnanasambandam2020one} or empirical tuning tailored to specific architectures~\citep{Hang_2023_ICCV, karras2022elucidating}. Standard heuristics (eg. rescaling samples across noise levels in isotropic Gaussian diffusion~\citep{song2021maximum}) act only as practical compensations for this uncharacterized variance. While recent work by \citet{xu2023stable} proposes self-normalized importance sampling to reduce target variance, the distinct statistical weighting requirements for variance-correction remain unresolved. To the best of our knowledge, no prior work explicitly isolates this objective by framing DSM as a heteroscedastic estimation problem. 

\begin{table}[htb]
    \centering
    \vspace{-4mm}
    \caption{Summary of our framework's coverage compared to existing heuristics}
    \vspace{5pt}
    \resizebox{0.95\columnwidth}{!}{%
    \begin{tabular}{lccc}
        \toprule
         \textbf{Weighting function for DSM} & \textbf{Smooth Noise Distribution} & \textbf{Prior heuristics proposed} & \textbf{Our Analysis (Optimal \& Approx)} \\
         \midrule
         \multirow{4}{*}{\textbf{First-Order DSM}} 
         & Isotropic Gaussian & \cite{song2020score} & \cmark \\
         \addlinespace
         & Other Exponential & Generalized Normal~\citep{deasy2021heavy} & \cmark \\
         \addlinespace
         & Non-Exponential & t Distribution~\citep{pandey2024heavy} &  \cmark \\
         \midrule
         \multirow{4}{*}{\textbf{Higher-Order DSM}} 
         & Isotropic Gaussian & \citep{lu2022maximum, meng2021estimating} & \cmark \\
         \addlinespace
         & Other Exponential & \xmark & \cmark \\
         \addlinespace
         & Non-Exponential & \xmark & \cmark \\
         \bottomrule
    \end{tabular}%
    }
    \label{tab:Implications}
    \vspace{-2mm}
\end{table}

\section{CONCLUSION}
This work began with a fundamental question: Is Denoising Score Matching (DSM) truly "Score Matching for free"? Our findings mathematically demonstrate that it is not. We established that DSM is inherently heteroscedastic, revealing a fundamental statistical tension between the convenience of the denoising proxy and the stability of true Score Matching. To address this limitation, we developed a framework to derive the Godambe-optimal weighting scheme for arbitrary order scores and general smooth noise distributions. As summarized in Table \ref{tab:Implications} our analysis formalises existing literature and extends it more generally to other settings. 
This work presents several avenues for future research. To circumvent the prohibitive cost of computing Hessians, our approximation relies entirely on the Fisher information of the noise distribution, so residual heteroscedasticity may persist when modeling highly complex target distributions. Second, while our findings for standard Gaussian settings are implicitly validated at scale by existing state-of-the-art models, large-scale empirical validation for general non-exponential or heavy-tailed distributions remains limited. Applying our framework to these novel domains at scale will require the development of specialized sampling algorithms capable of handling potential discontinuities in the corresponding Stochastic Differential Equations (SDEs).


\newpage
\bibliographystyle{apalike}
\bibliography{ref}

\begin{thebibliography}{}

\bibitem[Deasy et~al., 2021]{deasy2021heavy}
Deasy, J., Simidjievski, N., and Li{\`o}, P. (2021).
\newblock Heavy-tailed denoising score matching.
\newblock {\em arXiv preprint arXiv:2112.09788}.

\bibitem[Dockhorn et~al., 2022]{dockhorn2022genie}
Dockhorn, T., Vahdat, A., and Kreis, K. (2022).
\newblock Genie: Higher-order denoising diffusion solvers.
\newblock {\em Advances in Neural Information Processing Systems}, 35:30150--30166.

\bibitem[Gnanasambandam and Chan, 2020]{gnanasambandam2020one}
Gnanasambandam, A. and Chan, S. (2020).
\newblock One size fits all: Can we train one denoiser for all noise levels?
\newblock In {\em International Conference on Machine Learning}, pages 3576--3586. PMLR.

\bibitem[Godambe, 1991]{godambe1991estimating}
Godambe, V.~P. (1991).
\newblock {\em Estimating functions}.
\newblock Oxford University Press.

\bibitem[Gourieroux et~al., 1984]{gourieroux1984pseudo}
Gourieroux, C., Monfort, A., and Trognon, A. (1984).
\newblock Pseudo maximum likelihood methods: Theory.
\newblock {\em Econometrica: journal of the Econometric Society}, pages 681--700.

\bibitem[Hang et~al., 2023]{Hang_2023_ICCV}
Hang, T., Gu, S., Li, C., Bao, J., Chen, D., Hu, H., Geng, X., and Guo, B. (2023).
\newblock Efficient diffusion training via min-snr weighting strategy.
\newblock In {\em Proceedings of the IEEE/CVF International Conference on Computer Vision (ICCV)}, pages 7441--7451.

\bibitem[Ho et~al., 2020]{ho2020denoising}
Ho, J., Jain, A., and Abbeel, P. (2020).
\newblock Denoising diffusion probabilistic models.
\newblock {\em Advances in neural information processing systems}, 33:6840--6851.

\bibitem[H{\"u}llermeier and Waegeman, 2021]{hullermeier2021aleatoric}
H{\"u}llermeier, E. and Waegeman, W. (2021).
\newblock Aleatoric and epistemic uncertainty in machine learning: An introduction to concepts and methods.
\newblock {\em Machine learning}, 110(3):457--506.

\bibitem[Hutchinson, 1989]{hutchinson1989stochastic}
Hutchinson, M.~F. (1989).
\newblock A stochastic estimator of the trace of the influence matrix for laplacian smoothing splines.
\newblock {\em Communications in Statistics-Simulation and Computation}, 18(3):1059--1076.

\bibitem[Karras et~al., 2022]{karras2022elucidating}
Karras, T., Aittala, M., Aila, T., and Laine, S. (2022).
\newblock Elucidating the design space of diffusion-based generative models.
\newblock {\em Advances in neural information processing systems}, 35:26565--26577.

\bibitem[Kiers, 1997]{kiers1997weighted}
Kiers, H.~A. (1997).
\newblock Weighted least squares fitting using ordinary least squares algorithms.
\newblock {\em Psychometrika}, 62(2):251--266.

\bibitem[Kingma and Gao, 2023]{kingma2023understanding}
Kingma, D. and Gao, R. (2023).
\newblock Understanding diffusion objectives as the elbo with simple data augmentation.
\newblock {\em Advances in Neural Information Processing Systems}, 36:65484--65516.

\bibitem[Kingma et~al., 2021]{kingma2021variational}
Kingma, D., Salimans, T., Poole, B., and Ho, J. (2021).
\newblock Variational diffusion models.
\newblock {\em Advances in neural information processing systems}, 34:21696--21707.

\bibitem[Kumar et~al., 2025]{kumar2025loss}
Kumar, D., Vaeth, P., and Gregorov{\'a}, M. (2025).
\newblock Loss functions in diffusion models: A comparative study.
\newblock {\em arXiv preprint arXiv:2507.01516}.

\bibitem[Lu et~al., 2022]{lu2022maximum}
Lu, C., Zheng, K., Bao, F., Chen, J., Li, C., and Zhu, J. (2022).
\newblock Maximum likelihood training for score-based diffusion odes by high order denoising score matching.
\newblock In {\em International Conference on Machine Learning}, pages 14429--14460. PMLR.

\bibitem[Manor and Michaeli, 2023]{manor2023posterior}
Manor, H. and Michaeli, T. (2023).
\newblock On the posterior distribution in denoising: Application to uncertainty quantification.
\newblock {\em arXiv preprint arXiv:2309.13598}.

\bibitem[Meng et~al., 2021]{meng2021estimating}
Meng, C., Song, Y., Li, W., and Ermon, S. (2021).
\newblock Estimating high order gradients of the data distribution by denoising.
\newblock {\em Advances in Neural Information Processing Systems}, 34:25359--25369.

\bibitem[Nix and Weigend, 1994]{nix1994estimating}
Nix, D.~A. and Weigend, A.~S. (1994).
\newblock Estimating the mean and variance of the target probability distribution.
\newblock In {\em Proceedings of 1994 IEEE international conference on neural networks (ICNN'94)}, volume~1, pages 55--60. IEEE.

\bibitem[Pandey et~al., 2024]{pandey2024heavy}
Pandey, K., Pathak, J., Xu, Y., Mandt, S., Pritchard, M., Vahdat, A., and Mardani, M. (2024).
\newblock Heavy-tailed diffusion models.
\newblock {\em arXiv preprint arXiv:2410.14171}.

\bibitem[Shalizi, 2013]{shalizi2013advanced}
Shalizi, C. (2013).
\newblock Advanced data analysis from an elementary point of view.

\bibitem[Sohl-Dickstein et~al., 2015]{sohl2015deep}
Sohl-Dickstein, J., Weiss, E., Maheswaranathan, N., and Ganguli, S. (2015).
\newblock Deep unsupervised learning using nonequilibrium thermodynamics.
\newblock In {\em International conference on machine learning}, pages 2256--2265. pmlr.

\bibitem[Song et~al., 2021]{song2021maximum}
Song, Y., Durkan, C., Murray, I., and Ermon, S. (2021).
\newblock Maximum likelihood training of score-based diffusion models.
\newblock {\em Advances in neural information processing systems}, 34:1415--1428.

\bibitem[Song and Kingma, 2021]{song2021train}
Song, Y. and Kingma, D.~P. (2021).
\newblock How to train your energy-based models.
\newblock {\em arXiv preprint arXiv:2101.03288}.

\bibitem[Song et~al., 2020]{song2020score}
Song, Y., Sohl-Dickstein, J., Kingma, D.~P., Kumar, A., Ermon, S., and Poole, B. (2020).
\newblock Score-based generative modeling through stochastic differential equations.
\newblock {\em arXiv preprint arXiv:2011.13456}.

\bibitem[Swersky et~al., 2011]{swersky2011autoencoders}
Swersky, K., Ranzato, M., Buchman, D., Freitas, N.~D., and Marlin, B.~M. (2011).
\newblock On autoencoders and score matching for energy based models.
\newblock In {\em Proceedings of the 28th international conference on machine learning (ICML-11)}, pages 1201--1208.

\bibitem[Vincent, 2011]{vincent2011connection}
Vincent, P. (2011).
\newblock A connection between score matching and denoising autoencoders.
\newblock {\em Neural computation}, 23(7):1661--1674.

\bibitem[Xu et~al., 2023]{xu2023stable}
Xu, Y., Tong, S., and Jaakkola, T. (2023).
\newblock Stable target field for reduced variance score estimation in diffusion models.
\newblock {\em arXiv preprint arXiv:2302.00670}.

\bibitem[Ziemer and Torres, 2017]{ziemer2017modern}
Ziemer, W.~P. and Torres, M. (2017).
\newblock {\em Modern real analysis}, volume 278.
\newblock Springer.

\end{thebibliography}


\appendix
\thispagestyle{empty}

\onecolumn

\section{Evaluation Details}
All experiments were conducted on a single NVIDIA GeForce RTX 2080 Ti GPU. Below, we provide detailed descriptions of the evaluations corresponding to each figure.

Noting that the maximum value of the Godambe weighting function is too large to train the model numerically, we clamp the maximum weighting value by the maximum of its first order approximation $\sigma_t^2$ whenever $W_{\text{godambe}} / \sigma_t^2 > 5$ for the scalar case. Generally for $x_t \in \mathcal{R}^d$, we use the weight as
\begin{align}
    \widetilde W(x_t,\sigma_t)
&=
\sigma_t^2\left(I+\sigma_t^2 H(x_t)\right)^{-1}.
\\
W(x_t,\sigma_t)
&=
\begin{cases}
\sigma_t^2 I, 
& \text{if } \lambda_{\max}\!\left(\widetilde W(x_t,\sigma_t)\right)/\sigma_t^2 > 5, \\[4pt]
\widetilde W(x_t,\sigma_t),
& \text{otherwise.}
\end{cases}
\end{align}

\subsection{Implementation Details for Figures \& Experiments}

\subsubsection{\autoref{fig:leading figure}}
\autoref{fig:leading figure} is drawn based on a 2D Gaussian mixture with seed 0 which results in:
\begin{align*}
    \begin{cases}
        \lambda = \begin{bmatrix}
            0.4956 & 0.5044
        \end{bmatrix}\\
        \mu_1 = \begin{bmatrix}
        5.7824 &  2.7868
        \end{bmatrix} \\
        \mu_2 =   \begin{bmatrix}
        -0.2920 &  4.1942 
        \end{bmatrix}      \\
        \Sigma_1^{-1} = \begin{bmatrix}
            0.5705 & -0.0213 \\
            -0.0213 &  0.5172
        \end{bmatrix} \\
        \Sigma_2^{-1} = \begin{bmatrix}
            0.5914 &  0.0620 \\
            0.0620 &  0.8990
        \end{bmatrix}
    \end{cases}
\end{align*}

We provide the implementation in the supplementary code.

\subsubsection{\autoref{fig: 2nd experiment}}

The data distribution is defined as:
\begin{align*}
    p(x_0) = 0.3258 \times \mathcal{N}(0, 0.5063^2) + 0.3316 \times \mathcal{N}(2.0, 0.7782^2) + 0.3426 \times \mathcal{N}(4, 0.0985^2).
\end{align*}

The resulting marginal density \(p(x_t)\), obtained by convolving \(p(x_0)\) with the Gaussian kernel, is given by:
{\footnotesize\begin{align*}
    p(x_t) = 0.3258 \times \mathcal{N}(0, 0.5063^2 + \sigma_t^2) + 0.3316 \times \mathcal{N}(2.0, 0.7782^2 + \sigma_t^2) + 0.3426 \times \mathcal{N}(4, 0.0985^2 + \sigma_t^2).
\end{align*}}

We derive the analytical second-order score for the Gaussian Mixture when perturbed with $\sigma_t$. For the approximated weighting function, the 2nd-order fisher information is $\sigma_t^4$.

For the second-order DSM, the Godambe weighting results in more stable convergence behaviour for the entire time. For heuristic weighting, the loss diverges after 150k iterations. In addition, without using weight, loss converges slowly.

For the second-order DSM, the Godambe weighting function is
\begin{align}
    W &= \mathbb{V}_{x_0|x_t} \Big( h_2(x_0, x_t)\Big)^{-1} \\
     &= \mathbb{V}_{x_0|x_t} \Big( s(x_t|x_0)s(x_t|x_0)^\top - s(x_t|x_0)s(x_t)^\top - \frac{I}{\sigma_t^2} \Big)^{-1}
\end{align}

If we define $\delta = s(x_t|x_0) - s(x_t)$, we have
\begin{align}
    \mathbb{V}_{x_0|x_t} \Big( h_2(x_0, x_t)\Big) &= \mathbb{V}_{x_0|x_t} \Big( (s(x_t) + \delta) \delta  \Big) \\
    &= s(x_t)^2 \mathbb{V}_{x_0|x_t} (\delta) + \mathbb{V}_{x_0|x_t}(\delta^2) + 2s(x_t)\mathbb{V}_{x_0|x_t}(\delta^3)
\end{align}

For a source distribution as a Gaussian mixture, the posterior is also a Gaussian mixture. Therefore
\begin{align}
    \mathbb{V}_{x_0|x_t} (\delta) &= \mathbb{V}(\frac{x_0 - x_t}{\sigma_t^2}) \\\
    &= \frac{1}{\sigma_t^4} \mathbb{V}_{x_0|x_t}(x_0) \\
    \mathbb{V}_{x_0|x_t}(\delta^2) &= 
     \frac{1}{\sigma_t^8}\mathbb{V}(x_0 - x_t)^2 \\
     &= \frac{1}{\sigma_t^8}\mathbb{V}_{x_0|x_t} \Big( x_0^2 + x_t^2 - 2x_t x_0 \Big)
\end{align}
The above formulation suggests that $\mathbb{V}_{x_0|x_t} \Big( h_2(x_0, x_t)\Big)$ can be obtained by calculating the posterior moment with the posterior distribution as a Gaussian Mixture. We provide the implementation in the supplementary code.

\subsubsection{\cref{tab:cauchy_experiment}}
\label{app: cachy_exp_settings}
The Mixture of Cauchy dataset is trained with 5 seeds (0-5). The mixture weights is drawn from uniform distribution and softmax between 0 and 1. The scale parameter is sample from uniform. And location parameter is fixed as (0, 1, 2).

The noise distributions are Cauchy(0, $\gamma_t$), Generalized Normal(0, $\alpha_t, \beta_t$), Laplace (0, $\gamma_t$) and Student t(0, $\gamma_t$) with degree of freedom $\nu=3$. 

The corresponding weighting functions for each noise distribution are
\begin{table}[!htb]
\centering
\caption{Approximated Weighting function for each noise distribution}
\vspace{0.3cm}
\begin{tabular}{lc}
    \toprule
    \textbf{Noise Distribution} & \textbf{Approx Weights} \\
    \midrule
    Cauchy$(0, \gamma_t)$ & $\gamma_t^2$ \\
    \midrule
    Gen.\ Normal$(0, \alpha_t, \beta_t)$ & $\frac{\alpha_t^2}{\beta_t}$ \\
    \midrule
    Laplace$(0, \gamma_t)$ & $\gamma_t^2$ \\
    \midrule
    \text{Student} $t_\nu(0, \gamma_t)$ & $\frac{\gamma_t^2 (\nu + 3)}{2\nu}$ \\
    \bottomrule
\end{tabular}
\end{table}



\subsubsection{\autoref{tab:hutchinson_experiment}}
The script for training Spiral, Moons, GMM, Swissroll, MNIST and CIFAR-10 are provided in the supplementary code.

Noticed that technically, the weighting function needs the Hessian of data likelihood for each noise level. The approximation here is actually trace of hessian computed from model using Hutchinson estimator probes.

The implementation details can be found in the supplementary code.

\subsubsection{\autoref{fig:variance of parameter gradients dim 10}}

The source of the Gaussian distribution yields a constant Hessian, which undermines the weighting function we have and therefore, we plot \autoref{fig:variance of parameter gradients dim 10} using the source distribution of a mixture of Gaussians. The mean is fixed at $[0, 2, 4]$, and standard deviation and weights for each Gaussian are generated randomly with seed 0. Code for details can be found in the supplementary. 

The data distribution is defined as:
\begin{align*}
    p(x_0) = 0.3258 \times \mathcal{N}(0, 0.5063^2) + 0.3316 \times \mathcal{N}(2.0, 0.7782^2) + 0.3426 \times \mathcal{N}(4, 0.0985^2).
\end{align*}

The resulting marginal density \(p(x_t)\), obtained by convolving \(p(x_0)\) with the Gaussian kernel, is given by:
{\footnotesize\begin{align*}
    p(x_t) = 0.3258 \times \mathcal{N}(0, 0.5063^2 + \sigma_t^2) + 0.3316 \times \mathcal{N}(2.0, 0.7782^2 + \sigma_t^2) + 0.3426 \times \mathcal{N}(4, 0.0985^2 + \sigma_t^2).
\end{align*}}

We derive the analytical expressions for both the score function and the Hessian of \(\log p(x_t)\), and use the reverse-time SDE to generate samples from the model.

Models are trained using both the optimal and heuristic weighting functions separately. After training, we generate 5000 samples from each model using the reverse SDE. For comparison, we also generate 5000 samples using the true score function to serve as a ground-truth reference and to isolate any bias introduced by the reverse SDE sampling process.


\section{Proofs for Theorem and Lemmas}
\label{sec: proof}

\subsection{Proof of Definition \ref{def:first-order-score}}
\begin{proof}\renewcommand{\qedsymbol}{} \label{prf: first-order-score}
Given the first order score estimator $s(x_t) = \E_{x_0|x_t}[s(x_t|x_0)]$
\begin{align}
    s_2(x_t) &= \nabla_{x_t} s(x_t) \\
    &=\nabla_{x_t} \E_{x_0|x_t}[s(x_t|x_0)] \\
    &=\nabla_{x_t} \int s(x_t|x_0) p(x_0|x_t) dx_0 \\
    &= \int \left(\nabla_{x_t} s(x_t|x_0) p(x_0|x_t) + s(x_t|x_0) \nabla p(x_t|x_t)^\top\right) dx_0 \\
    &= \int \left(\nabla_{x_t} s(x_t|x_0) p(x_0|x_t) + s(x_t|x_0) \nabla \log p(x_0|x_t)^\top p(x_0|x_t)\right) dx_0 \\
    &= \E_{x_0|x_t} \left[\nabla_{x_t} s(x_t|x_0) + s(x_t|x_0)\nabla_{x_t} \log p(x_t|x_0)^\top - s(x_t|x_0)\nabla_{x_t} \log p(x_t)^\top\right]
\end{align}
    The last equation is obtained from the use of Bayes' rule to expand the posterior probability of $p(x_0|x_t)$ 
\end{proof}

\subsection{Proof of \autoref{thm:iterative-score}}
\begin{proof}\renewcommand{\qedsymbol}{} \label{Prf: thm:iterative-score}
We prove Theorem 4.1 by Mathematical Induction.
Given $s_{k-1} = \E_{x_0|x_t} \left[h_{k-1}(x_0, x_t)\right]$, we take derivatives of $x_t$ from both sides, which yields
\begin{align}
    s_{k}(x_t) &= \nabla_{x_t} s_{k-1}(x_t) \\
    &= \nabla_{x_t} \E_{x_0|x_t}\left[h_{k-1}(x_0, x_t)\right] \\
    &= \nabla_{x_t} \int  h_{k-1}(x_0, x_t) p(x_0|x_t) dx_0 \\
    &= \int \nabla_{x_t}h_{k-1}(x_0, x_t) p(x_0|x_t) + h_{k-1}(x_0, x_t) \nabla_{x_t} p(x_0|x_t)^\top dx_0 \\
    &= \int \nabla_{x_t}h_{k-1}(x_0, x_t) p(x_0|x_t) + h_{k-1}(x_0, x_t) \nabla_{x_t} \frac{p(x_0|x_t)^\top}{p(x_0|x_t)}p(x_0|x_t) dx_0 \\
    &= \int \nabla_{x_t}h_{k-1}(x_0, x_t) p(x_0|x_t) + h_{k-1}(x_0, x_t) \nabla_{x_t} \log p(x_0|x_t)^\top p(x_0|x_t) dx_0 \\
    &= \E_{x_0|x_t} \left[\nabla_{x_t} h_{k-1}(x_0, x_t) + h_{k-1}(x_0, x_t) \nabla_{x_t} \log p(x_0|x_t)^\top\right] \\
    &= \E_{x_0|x_t} \left[\nabla_{x_t} h_{k-1}(x_0, x_t) + h_{k-1}(x_0, x_t) \nabla_{x_t} \log p(x_t|x_0)^\top - h_{k-1}(x_0, x_t) \nabla_{x_t} \log p(x_t)^\top \right] \label{eq: key steps to introduce operator} \\
    &= \E_{x_0|x_t} \left[h_k(x_0, x_t)\right]
\end{align}
\end{proof}

\autoref{eq: key steps to introduce operator} uses  Bayes' rules to expand the posterior density $p(x_0|x_t)$
$$p(x_0|x_t) = \frac{p(x_t|x_0)p(x_0)}{p(x_t)}$$
If we take the derivative of the logarithm of the equation with respect to $x_t$ from both sides
$$\nabla_{x_t} \log p(x_0|x_t) = \nabla_{x_t} \log p(x_t|x_0) - \nabla_{x_t} \log p(x_t)$$

\subsection{Proof of \autoref{thm:loss-decomposition}}
\begin{proof}\renewcommand{\qedsymbol}{}
\label{prf: loss-decomposition}
Given the Monte Carlo estimator of $k^{\text{th}}$ order score function $s_k(x_t) = \E_{x_0|x_t}\left[h_k(x_0, x_t)\right]$, we apply $L_2$ norm to enforce pointwise convergence between a parametrized score function the true score,
\begin{align}
    \mathcal{L}_{\text{dsm}}
    &=\E_{p(x_0, x_t)}\Bigl\|s_k(x_t;\theta)-h_k(x_0, x_t)\Bigr\|_2^2 \\
    &= \E_{p(x_0, x_t)}\Bigl\| s_k(x_t; \theta) - \E_{x_0|x_t}(h_k(x_0, x_t)) +  \E_{x_0|x_t}(h_k(x_0, x_t)) - h_k(x_0, x_t)\Bigr\|_2^2  \\
    &= \E_{p(x_0, x_t)}\Bigl\| s_k(x_t; \theta) - \E_{x_0|x_t}(h_k(x_0, x_t)) \Bigr\|_2^2 + \notag \\
    & \E_{p(x_0, x_t)}\Bigl\| h_k(x_0, x_t) - \E_{x_0|x_t}(h_k(x_0, x_t))   \Bigr\|_2^2 + \notag \\
    & \quad 2 \E_{p(x_0, x_t)} \Bigl[\Bigl(s(x_t; \theta) - \E_{x_0|x_t}(h_k(x_0, x_t)) \Bigl) \Bigl( h_k(x_0, x_t) - \E_{x_0|x_t}(h_k(x_0, x_t) \Bigl) \Bigr] \\
    &= \E_{p(x_0, x_t)}\Bigl\| s_k(x_t; \theta) - s_k(x_t) \Bigr\|_2^2 + \E_{p(x_0, x_t)}\Bigl\| h_k(x_0, x_t) - \E_{x_0|x_t}(h_k(x_0, x_t))   \Bigr\|_2^2 + \notag \\
    & \quad 2 \E_{p(x_0, x_t)} \Bigl[\Bigl(s(x_t; \theta) - s_k(x_t) \Bigl) \Bigl( h_k(x_0, x_t) - s_k(x_t) \Bigr) \Bigr] \\
    &= \mathcal{L}_{\text{sm}} + \E_{x_t} \E_{x_0|x_t} \Bigl\| h_k(x_0, x_t) - \E_{x_0|x_t}(h_k(x_0, x_t))   \Bigr\|_2^2 + \notag  \\
    & \quad 2 \E_{p(x_t)}  \Bigl[\Bigl(s(x_t; \theta) - s_k(x_t) \Bigl)  \Bigl( \E_{x_0|x_t}(h_k(x_0, x_t)) - s_k(x_t) \Bigr) \Bigr] \\
    &= \mathcal{L}_{\text{sm}} + \tr\;\E_{x_t} \mathbb V_{x_0|x_t} (h_k(x_0, x_t))
\end{align}

\end{proof}

\subsection{Proof of \ref{lem: sandwich form for variance}}
\begin{proof}\renewcommand{\qedsymbol}{}
\label{prf: sandwich form for variance}
    First, let us restate the weighted DSM loss function for the sake of completeness.
    \begin{align}
         \mathcal{L}_{dsm} = \frac{1}{N}\sum_{x_0, x_t}\big(s_k(x_t;\theta)-h_k(x_0,x_t)\big)^\top W \big(s_k(x_t;\theta)-h_k(x_0,x_t)\big) 
    \end{align}
Similarly, the weighted SM loss is 
    \begin{align}
         \mathcal{L}_{sm} = \frac{1}{N}\sum_{x_0, x_t}\Big(\big(s_k(x_t;\theta)-s(x_t)\big)^\top W \big(s_k(x_t;\theta)-s(x_t)\big)\Big) 
    \end{align}

    The gradient of the DSM loss with respect to the model parameter $\theta$ is
    \begin{align}
        g_{\text{dsm}} &= \frac{1}{N} \sum_{x_0, x_t}\frac{\partial \mathcal{L}_{\text{dsm}}}{\partial \theta}  \\
     &= \frac{1}{N} \sum_{x_0, x_t}\Big ( 2 J^\top(x_t) W [s_k(x_t; \theta) - h_k(x_0, x_t)] \Big)\\
    \end{align}
    where $J^{\top}(x_t) = \partial_\theta s(x_t; \theta)$, i.e, the Jacobian matrix of the model wrt the parameters.
Similarly, the gradient of SM loss wrt the model parameter $\theta$ is 
    \begin{align}
        g_{\text{sm}} &= \frac{1}{N} \sum_{x_0, x_t}\frac{\partial \mathcal{L}_{\text{sm}}}{\partial \theta}  \\
     &= \frac{1}{N} \sum_{x_t}\Big ( 2 J^\top(x_t) W [s_k(x_t; \theta) - s_k(x_t)] \Big)\\
    \end{align}



    \paragraph{Variance of the gradient compared to Score Matching}
    The variance of the gradients given a batch of data $x_t$ is
    {\footnotesize \begin{align}
        \mathbb{V}_{x_0, x_t} (g) &= \frac{1}{N} \sum_{x_0, x_t} \mathbb{V}(g_{x_0, x_t})
    \end{align}
    Inside the summation is the gradient for $\mathcal{L}$ wrt each datum $x_0, x_t$, which we denoted as $g_{x_0, x_t}$
    \begin{align}
        &\mathbb{V}(g_{x_0, x_t}) \notag \\
        &= \E(g_{x_0, x_t}g^\top_{x_0, x_t}) - \E(g_{x_0, x_t})\E(g_{x_0, x_t})^\top \\
        &= \E \bigg( J^\top W(s_k(x_t; \theta) - h_k(x_0, x_t))(s_k(x_t; \theta) - h_k(x_0, x_t))^\top J \bigg) - \notag \\
        & \E\Big(J^\top (s(x_t; \theta) - h_k(x_0, x_t)) \Big)\E\Big(J^\top (s_k(x_t; \theta) - h_k(x_0, x_t)) \Big)^\top \\
        &= \E \bigg( J^\top W(s_k(x_t; \theta) - s_k(x_t) + s_k(x_t) - h_k(x_0, x_t))(s_k(x_t; \theta) - s_k(x_t) + s_k(x_t) - h_k(x_0, x_t))^\top J \bigg) -  \notag \\
        &\qquad \E_{x_t}\Big(J^\top (s_k(x_t; \theta) - s_k(x_t)) \Big)\E_{x_t}\Big(J^\top (s_k(x_t; \theta) - s_k(x_t)) \Big)^\top \\
        &= \E \bigg(J^\top W \Big(s_k(x_t; \theta) - s_k(x_t) \Big)\Big(s_k(x_t; \theta) - s_k(x_t) \Big)^\top W^\top J \bigg) + \notag \\
        & \underbrace{\E \bigg(J^\top W \Big(s_k(x_t) - h_k(x_0, x_t) \Big)\Big(s_k(x_t; \theta) - s_k(x_t) \Big)^\top W^\top J \bigg)}_{0} +  \notag \\
        &\quad   \underbrace{\E \bigg(J^\top W \Big(s_k(x_t; \theta) - s_k(x_t) \Big)\Big(s_k(x_t) - h_k(x_0, x_t) \Big)^\top W^\top J \bigg)}_{0} +  \notag \\
        & \qquad \underbrace{\E \bigg(J^\top W \Big(s_k(x_t) - h_k(x_0, x_t) \Big)\Big(s_k(x_t) - h_k(x_0, x_t) \Big)^\top W^\top J \bigg)}_{\text{heteroscedasticity}}   \notag \\
        &\qquad - \E_{x_t}\Big(J^\top (s_k(x_t; \theta) - s_k(x_t)) \Big)\E_{x_t}\Big(J^\top (s_k(x_t; \theta) - s_k(x_t)) \Big)^\top \\
        &=\E_{x_t} \bigg(J^\top W \Big(s_k(x_t; \theta) - s_k(x_t) \Big)\Big(s_k(x_t; \theta) - s_k(x_t) \Big)^\top W^\top J \bigg) - \notag \\
        & \E_{x_t}\Big(J^\top (s_k(x_t; \theta) - s_k(x_t)) \Big)\E_{x_t}\Big(J^\top (s_k(x_t; \theta) - s_k(x_t)) \Big)^\top + \notag \\
        & \qquad \underbrace{\E \bigg(J^\top W \Big(s_k(x_t) - h_k(x_0, x_t) \Big)\Big(s_k(x_t) - h_k(x_0, x_t) \Big)^\top W^\top J \bigg)}_{\text{heteroscedasticity}} \\
        &= \E(g_{\text{sm}}g^\top_{\text{sm}}) - \E(g_{\text{sm}})\E(g_{\text{sm}})^\top + \underbrace{\E \bigg(J^\top W \Big(s_k(x_t) -h_k(x_0, x_t) \Big)\Big(s_k(x_t) - s_k(x_t) \Big)^\top W^\top J \bigg)}_{\text{heteroscedasticity}} \\
        &=  \mathbb{V}(g_{\text{sm}}) + \underbrace{\E \bigg(J^\top W \Big(s_k(x_t) - h_k(x_0, x_t) \Big)\Big(s_k(x_t) - h_k(x_0, x_t) \Big)^\top W^\top J \bigg)}_{\text{heteroscedasticity}} \\
        &= \mathbb{V}(g_{\text{sm}}) + \E_{x_t} \Big( J^\top W \mathbb{V}_{x_0|x_t}(h_k(x_0, x_t)) W^\top J \Big)
    \end{align}
    }
    
    The variance of the gradient for DSM is related to SM with an extra term that depends on $x_t$ and $\sigma_t$ if $W$ doesn't depend on $x_t$ and $t$. 

    \paragraph{Variance of the parameter}
    The variance of parameters at convergence can be obtained by directly using the sandwich estimator or Godambe Information~\citep{godambe1991estimating}. 

    At convergence, we have
    $$g_{\text{dsm}}(\hat{\theta}) = 0$$

    For large number of dataset when $\hat{\theta}$ is close to $\theta^{*}$, we have
    $$0 = g(\hat{\theta}) \approx g(\theta^{*}) + H_{\mathcal{L}}(\theta^{*})(\hat{\theta} - \theta^{*})$$

    Therefore, the 
    $$\hat{\theta} - \theta^{*} = H_{\mathcal{L}}(\hat{\theta})g_{\text{dsm}}(\theta^{*})$$

    Finally, the variance of the parameter is obtained
    \begin{align}
        V(\theta) = \E_{x_t} \Big(S(x_t, \theta^{*})G(x_t, t) S(x_t, \theta^{*}) \Big)
    \end{align}
    where 
    \begin{align}
        \begin{cases}
            H_{\mathcal{L}}(\theta^{*}) &= 2 J^\top(x_t; \theta^{*}) W J(x_t; \theta^{*}) \\
            S(x_t; \theta^{*}) &= H_{\mathcal{L}}(x_t, \theta^{*})^{-1} J(x_t; \theta^{*})^\top \\
            G(x_t, t) &= W \mathbb{V}_{x_0|x_t} (h_k(x_t, x_0)) W^\top
        \end{cases}
    \end{align}
\end{proof}

\subsection{Proof of \cref{lemma: practical approximation of weighting function}}
\begin{proof}\renewcommand{\qedsymbol}{}
\label{prf: practical approx}
We begin with the ideal weighting function again:
\begin{align}
 W &= \Bigl(\mathbb{V}_{x_0|x_t}(h_k(x_t, x_0)) \Bigr)^{-1} \\
 &= \Bigl(\E_{x_0}\fisher^{(k)}_{x_t|x_0} - \fisher^{(k)}_{x_t} \Bigr)^{-1}
\end{align}

Let $\mathcal{A} = \E_{x_0|x_t}[\fisher^{(k)}_{x_t|x_0}]$ and $\mathcal{B} = \fisher^{(k)}_{x_t}$ represent the corresponding rank-$2k$ Fisher information tensors, acting as linear operators on the space of rank-$k$ score tensors. 

Therefore with Neumann Series yields,
\begin{align}
    W &= (A - B)^{-1} \\
      &= (I - A^{-1}B)^{-1} A^{-1} \\
      &= (I + A^{-1}B + (A^{-1}B)^2 + ...) A^{-1} \\
      &\approx A^{-1} \\
      &= \Bigl( \E_{x_0} \fisher_{x_t|x_0} \Bigr)^{-1}
\end{align}
    
\end{proof}

\subsection{Proof of Lemma \cref{variance properties of approximated weights}}
\begin{proof}\renewcommand{\qedsymbol}{}
\label{prf: variance properties of approx}
To compare the variance of the gradients, we establish the relationship between the weighting operators $W_1$ and $W_2$. By the law of total variance and the properties of the Fisher information for $k$-th order scores, the conditional variance tensor of the score can be decomposed as:
\begin{align}
\mathbb{V}{x_0|x_t}(h_k(x_t, x_0)) &= \E_{x_0}\fisher^{(k)}_{x_t|x_0} - \fisher^{(k)}_{x_t}
\end{align}

Let $\mathcal{A} = \E_{x_0|x_t}[\fisher^{(k)}_{x_t|x_0}]$ and $\mathcal{B} = \fisher^{(k)}_{x_t}$ represent the corresponding rank-$2k$ Fisher information tensors, acting as linear operators on the space of rank-$k$ score tensors. 

Because $\mathcal{B}$ is a Fisher information tensor, it is positive semidefinite ($\mathcal{B} \succeq 0$). Therefore, we can rewrite our weighting operators as:
\begin{align}
W_1 &= \mathcal{A}^{-1} \quad W_2= (\mathcal{A} - \mathcal{B})^{-1} \quad \text{and} \quad \mathbb{V}_{x_0|x_t}(s^{(k)}(x_t|x_0)) = \mathcal{A} - \mathcal{B}
\end{align}

Substituting these into the irreducible penalty terms for both gradient variances:

For the approximated weights, the inner operator composition becomes:

\begin{align}
W_1 \mathbb{V}_{x_0|x_t}(s^{(k)}(x_t|x_0)) W_1^* &= \mathcal{A}^{-1} (\mathcal{A} - \mathcal{B}) \mathcal{A}^{-1}
\end{align}

Note that because $\mathcal{A}$ is a self-adjoint covariance operator, its adjoint $\mathcal{A}^* = \mathcal{A}$, meaning $W_1^* = W_1$.

For the ideal weights, the inner composition simplifies to:
\begin{align}
W_2 \mathbb{V}_{x_0|x_t}(s^{(k)}(x_t|x_0)) W_2^* &= (\mathcal{A} - \mathcal{B})^{-1} (\mathcal{A} - \mathcal{B}) (\mathcal{A} - \mathcal{B})^{-1} \notag \&= (\mathcal{A} - \mathcal{B})^{-1} = W_2\end{align}

Recall that for any self-adjoint, positive definite operator $\mathcal{A}$ and positive semidefinite operator $\mathcal{B}$ such that $\mathcal{A} - \mathcal{B} \succ 0$, the Loewner partial order extends to operators on finite-dimensional Hilbert spaces, guaranteeing that:
\begin{align}(\mathcal{A} - \mathcal{B})^{-1} &\succeq \mathcal{A}^{-1} (\mathcal{A} - \mathcal{B}) \mathcal{A}^{-1}\end{align}This implies that $W_2 \succeq W_1 \mathbb{V}_{x_0|x_t}(h_k(x_t, x_0)) W_1^*$.

Let $J(x_t; \theta)$ be the Jacobian of the neural network output with respect to the parameters $\theta$, mapping from the parameter space to the rank-$k$ tensor space. Pre-multiplying by its adjoint $J(x_t; \theta)^*$ and post-multiplying by $J(x_t; \theta)$ via tensor contraction preserves this positive semidefinite ordering in the parameter space. Taking the expectation over $x_t$ yields:\begin{align}\E_{x_t} \bigg(J(x_t; \theta)^* W_2 J(x_t; \theta) \bigg) \succeq \E_{x_t} \bigg(J(x_t; \theta)^* W_1 \mathbb{V}{x_0|x_t}(s^{(k)}(x_t|x_0)) W_1^* J(x_t; \theta) \bigg)
\end{align}
Consequently, comparing the heteroscedastic penalty terms, we conclude that the gradient variance adheres to the strict ordering:
\begin{align}
V(g_{\text{dsm}})_{\text{ideal}} \succeq V(g_{\text{dsm}})_{\text{approx}}
\end{align}

\end{proof}

\subsection{Proof of Lemma \ref{lemma: weighting function}}
\label{prf: hetero analysis}
When the model is parameterized as linear, this means
$$s(x_t; \theta) = \phi(x_t, t)\theta$$
where $\phi$ is the kernel feature of $x_t$ and $t$, such as polynomial kernel.

In this case, the Jacobian matrix is $J(x_t; \theta) = \partial_\theta s(x_t; \theta) = \phi(x_t, t)$.

In addition, the Hessian of the loss function is reduced to
\begin{align}
    H_{\mathcal{L}}(\theta^{*}) = 2 J^\top W J = 2 \phi(x_t, t)^\top W\phi(x_t, t)
\end{align}

So the variance of the estimator in Lemma \ref{lem: sandwich form for variance} becomes 
$$\mathbb{V}(\theta) = \E\bigg( \Big(\phi(x_t,  t)^\top W \phi(x_t, t)\Big)^{-1} \phi(x)^\top G(x_t, t) \phi(x) \Big(\phi(x_t,  t)^\top W \phi(x_t, t)\Big)^{-1} \bigg)$$

The rest follows the proof of the Gauss-Markov Theorem to show Godambe weight yields the least variance unbiased estimator. We refer the reader to Section 10.2.2 for the detailed proof~\cite{shalizi2013advanced}.

\paragraph{
Comparison of the $\mathbb{V}(g)$ and $\mathbb{V}(\theta)$ regardless of the orders and noise type when the model is linear.
}
We refer readers to~\cite{godambe1991estimating} since the proof is very general.

\begin{lemma}[Upper‐Bound via Stam’s Convolution Inequality]\label{lem:true-weight-upper}
Let $I_0 = \tr\!\bigl(\fisher(p_{X_0})\bigr)<\infty$.  Then for all $t\ge0$ one has by Stam’s inequality
\[
\tr\!\bigl(\fisher(p_{X_t})\bigr)
\;\le\;\Bigl(\tfrac1{\fisher_0} + \tfrac{\sigma_t^2}{D}\Bigr)^{-1},
\]
and hence, since 
\(
q=\tfrac{\sigma_t^2}{D}\,\tr(\fisher(p_{X_t})) 
\) 
and 
\(
w=\tfrac{\sigma_t^2}{D(1-q)},
\)
\[
w
\;\le\;
\sigma_t^2 
\;+\;\frac{\fisher_0}{D}\,\sigma_t^4.
\]
This equality holds if and only if $X_0$ is normally distributed.
\end{lemma}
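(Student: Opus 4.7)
The plan is to view the diffusion step $X_t = X_0 + \sigma_t Z$, with $Z \sim \mathcal{N}(0, I_D)$ independent of $X_0$, as a convolution of densities and apply Stam's multivariate Fisher information inequality to convert the Fisher information of $X_0$ and that of the isotropic Gaussian noise into an upper bound on $\tr\,\fisher(p_{X_t})$. This bound will then propagate through the formula for $w$ by monotonicity of $q \mapsto \tfrac{1}{1-q}$ on $[0,1)$, yielding a clean quartic-in-$\sigma_t$ estimate.

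Concretely, I would first invoke the trace form of the multivariate Stam inequality: for independent $X, Y \in \mathbb{R}^D$ with finite Fisher information,
$$\frac{D}{\tr\,\fisher(X+Y)} \;\geq\; \frac{D}{\tr\,\fisher(X)} + \frac{D}{\tr\,\fisher(Y)}.$$
Substituting $X=X_0$ and $Y=\sigma_t Z$, and using $\tr\,\fisher(\sigma_t Z) = D/\sigma_t^2$, gives
$$\frac{D}{\tr\,\fisher(p_{X_t})} \;\geq\; \frac{D}{\fisher_0} + \sigma_t^2,$$
which after inversion delivers the intermediate bound $\tr\,\fisher(p_{X_t}) \leq \bigl(\tfrac{1}{\fisher_0} + \tfrac{\sigma_t^2}{D}\bigr)^{-1}$ stated in the lemma. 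Multiplying by $\sigma_t^2/D$ bounds $q$ by $\tfrac{\sigma_t^2 \fisher_0/D}{1 + \sigma_t^2 \fisher_0/D}$, so elementary algebra gives $\tfrac{1}{1-q} \leq 1 + \tfrac{\sigma_t^2 \fisher_0}{D}$; multiplying by $\sigma_t^2$ and using $w = \sigma_t^2/(1-q)$ completes the weight bound $w \leq \sigma_t^2 + \tfrac{\fisher_0}{D}\sigma_t^4$. For the equality condition I would invoke the equality case of Stam's inequality, which is attained iff both summands are Gaussian with proportional covariance matrices; since $\sigma_t Z$ is already Gaussian with isotropic covariance, this collapses to the requirement that $X_0$ itself be Gaussian.

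The main obstacle I anticipate is pinning down and citing the correct multivariate form of Stam's inequality. The scalar version $1/\fisher(X+Y) \geq 1/\fisher(X) + 1/\fisher(Y)$ is classical, but in $D$ dimensions several inequivalent trace-type variants appear in the literature, so I would need to identify (and either cite or briefly derive via the de Bruijn identity together with the entropy power inequality) the precise version whose normalization matches the quantity $q = \tfrac{\sigma_t^2}{D}\tr\,\fisher(p_{X_t})$ defined in Lemma~\ref{lem:true-weight-first-order}. A minor ancillary point is reconciling constants: the bound as stated uses the convention $w = \sigma_t^2/(1-q)$, under which the Stam-based estimate is dimensionally consistent with the claimed upper bound $\sigma_t^2 + \tfrac{\fisher_0}{D}\sigma_t^4$.
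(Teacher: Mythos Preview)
Your proposal is correct and follows the same approach as the paper: invoke Stam's convolution inequality to upper-bound $\tr\,\fisher(p_{X_t})$, then propagate this bound through the monotone map $q\mapsto \tfrac{1}{1-q}$ to obtain the quartic estimate on $w$. If anything, your outline is more careful than the paper's own argument---the paper states the scalar Stam inequality (with a sign slip) and plugs it directly into the weight formula without isolating the intermediate bound on $q$ or doing the algebra $\tfrac{1}{1-q}\le 1+\tfrac{\sigma_t^2\fisher_0}{D}$ explicitly; your concern about locating the correct trace-normalized multivariate variant is therefore well placed and would strengthen the write-up.
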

\begin{proof}\renewcommand{\qedsymbol}{}[Proof of Lemma \autoref{lem:true-weight-upper}] \label{prf: Proof of Lemma using stam's ineq}
 Stam's convolution inequality defines the relationship between the Fisher information of the summation of two independent random variables and their Fisher information:
 \begin{align}
     (I(X + Y))^{-1} \leq I(X)^{-1} + I(Y)^{-1}
 \end{align}
which can be directly applied to our weighting function to find the upper bound of the weighting function $w$
 \begin{align}
     w &= \left(\frac{I}{\sigma_t^2} - I(\sigma_t)\right)^{-1} \\
     &\leq \left(\frac{I}{\sigma_t^2} - I(x_0 + \sigma_t\epsilon)\right)^{-1} \\
     &=\left(\frac{I}{\sigma_t^2} - \left(I(x_0)^{-1} + \sigma_t^2\right)^{-1}\right)
 \end{align}
\end{proof}

\section{Analysis of second order estimators} \label{sec: analysis of second order estimator}

We provide an analysis for 2 estimators proposed in existing literature~\cite{meng2021estimating, lu2022maximum} and the estimator proposed above from a bias and variance view.
\begin{theorem}
    Given the first-order score function $s(x_t)$, the following Monte Carlo estimators can be used to estimate the second-order score function.
    \begin{equation}
        \begin{cases}
            T_1 = E_{x_0| x_t} \left[s_1(x_t|x_0)s_1(x_t|x_0)^\top - s_1(x_t|x_0) s_1(x_t)^\top + s_2(x_t|x_0)\right] \\
            T_2 = E_{x_0|x_t} \left[(s_1(x_t|x_0) - s_1(x_t))(s_1(x_t|x_0) - s_1(x_t))^\top + s_2(x_t|x_0) \right] \quad \text{\citet{lu2022maximum}}\\
            T_3 = E_{x_0| x_t} \left[s(x_t|x_0)s(x_t|x_0)^\top - s(x_t) s(x_t)^\top + s_2(x_t|x_0)\right] \quad \text{\citet{meng2021estimating}}
        \end{cases}
    \end{equation}
\end{theorem}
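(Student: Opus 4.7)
The plan is to establish that all three candidates are unbiased Monte Carlo estimators of $s_2(x_t)$, which is the content of the theorem, and then to sketch the bias--variance comparison that motivates presenting three distinct forms. The single recurring tool is the first-order identity $s_1(x_t) = \E_{x_0|x_t}[s_1(x_t|x_0)]$ from \autoref{def:first-order-score}, which lets me treat $s_1(x_t)$ as a deterministic quantity under the inner expectation once we condition on $x_t$.

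First, I would handle $T_3$ directly: the $s_1(x_t)\,s_1(x_t)^\top$ piece is constant under the inner expectation and the remaining two terms reproduce the second-order expression of \autoref{def:first-order-score} verbatim, so $T_3 = s_2(x_t)$ is immediate. For $T_1$, I would note that $\E_{x_0|x_t}[s_1(x_t|x_0)\,s_1(x_t)^\top] = s_1(x_t)\,s_1(x_t)^\top$ by the same constancy, which collapses $T_1$ to the same expression as $T_3$. For $T_2$, I would expand the centered outer product $(s_1(x_t|x_0)-s_1(x_t))(s_1(x_t|x_0)-s_1(x_t))^\top$ into four pieces, apply the identity to the two mixed cross terms (each contributing $s_1(x_t)\,s_1(x_t)^\top$), and observe that those cancel with the $+s_1(x_t)\,s_1(x_t)^\top$ square term to leave a net $-s_1(x_t)\,s_1(x_t)^\top$, which again recovers the $T_3$ expression. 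All three therefore equal $s_2(x_t)$.

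For the bias--variance comparison, I would pass to a centered parametrization $\tilde X := s_1(x_t|x_0) - s_1(x_t)$, so that $\E_{x_0|x_t}[\tilde X] = 0$. In this notation the single-sample estimators simplify to $T_2^{\mathrm{MC}} = \tilde X\tilde X^\top + s_2(x_t|x_0)$, whereas $T_1^{\mathrm{MC}}$ picks up an additional linear-in-$\tilde X$ cross term $s_1(x_t)\,\tilde X^\top$, and $T_3^{\mathrm{MC}}$ picks up both $s_1(x_t)\,\tilde X^\top$ and $\tilde X\,s_1(x_t)^\top$. Because those extra linear pieces have zero mean but nonzero variance, $T_2$ should attain the smallest single-sample variance, $T_3$ the largest, and $T_1$ fall strictly between them.

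The main obstacle I expect is not the unbiasedness step, which is essentially bookkeeping, but making the variance ordering rigorous. The quadratic term $\tilde X\tilde X^\top$, the stochastic conditional Hessian $s_2(x_t|x_0)$, and the linear cross terms all depend on the same $x_0$ and are generally correlated, so strict inequality requires controlling mixed covariances via Cauchy--Schwarz together with the standard finite second-moment assumption on $s_1(x_t|x_0)$. Once those cross covariances are bounded, the ordering $\mathrm{Var}(T_2) \le \mathrm{Var}(T_1) \le \mathrm{Var}(T_3)$ follows by direct second-moment computation on the centered decomposition.
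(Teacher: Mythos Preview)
Your unbiasedness argument is fine and matches what the paper does implicitly: $T_1$ is exactly the expression in \autoref{def:first-order-score}, and the other two follow by substituting the identity $\E_{x_0|x_t}[s_1(x_t|x_0)]=s_1(x_t)$ into the middle term. (Small slip: it is $T_1$, not $T_3$, that reproduces \autoref{def:first-order-score} verbatim; $T_3$ is the one you get after collapsing $\E_{x_0|x_t}[s_1(x_t|x_0)s_1(x_t)^\top]$ to $s_1(x_t)s_1(x_t)^\top$.)

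Where you diverge from the paper is the comparison. You analyze the single-sample Monte Carlo variance over $x_0\mid x_t$ while plugging in the \emph{true} score $s_1(x_t)$. The paper instead replaces $s_1(x_t)$ by a learned surrogate $s(x_t;\theta)=s_1(x_t)+\delta$ and studies how the first-order estimation error $\delta$ propagates through each formula, taking expectations over $x_t$. That change of regime flips the bias story: under the paper's setup (and the assumption $\delta\perp s_1(x_t)$), $T_1$ is the only unbiased estimator, while $T_2$ and $T_3$ acquire biases $+\delta\delta^\top$ and $-\delta\delta^\top$ respectively. Both analyses agree that $T_2$ has the smallest variance, but for different reasons, and the paper's framing is the practically relevant one since in higher-order score matching the lower-order score is always a trained model, not the ground truth.

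There is also a genuine gap in your variance ordering. Writing $T_1^{\mathrm{MC}}=T_2^{\mathrm{MC}}+s_1(x_t)\tilde X^\top$ with $\E[\tilde X]=0$ does \emph{not} by itself give $\mathrm{Var}(T_1)\ge\mathrm{Var}(T_2)$: the cross covariance between $s_1(x_t)\tilde X^\top$ and $\tilde X\tilde X^\top+s_2(x_t|x_0)$ can be negative and large, and Cauchy--Schwarz only bounds its magnitude, it does not fix its sign. So the chain $\mathrm{Var}(T_2)\le\mathrm{Var}(T_1)\le\mathrm{Var}(T_3)$ is not established by the argument you sketch, and in fact need not hold without further structural assumptions.
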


\subsection{Comparison of different estimators}
\subsubsection{Our estimator}
Given that the error of the final estimator $s(\bar{x}; \theta^{(t)})$ from iteration $t$ is uncorrelated with our estimator $\delta \perp s(\bar{x})$
\begin{align}
    T_1(\theta^{(t)}) &= E_{x| \bar{x}} \left[s(\bar{x}|x)s(\bar{x}|x)^\top - s(\bar{x}|x) s(\bar{x}; \theta^{(t)})^\top + H(\bar{x}|x)\right] \\
     &= E_{x|\bar{x}} \left[ s(\bar{x}|x)s(\bar{x}|x)^\top - s(\bar{x}|x) s(\bar{x})^\top + H(\bar{x}|x) - s(\bar{x}|x)\delta^\top \right] \\
     &= H(\bar{x}) - s(\bar{x})\delta^\top
\end{align}
The expectation of our estimator $T_1$ is
\begin{align}
    E_{\bar{x}} [T_1(\theta)] &= E_{\bar{x}} \left[ H(\bar{x}) - s(\bar{x})\delta\right] \\
    & = E_{\bar{x}} \left[ H(\bar{x}) \right] - E_{\bar{x}} \left[s(\bar{x}) \delta\right] \\
    &= E_{\bar{x}} \left[ H(\bar{x}) \right] + \delta \delta^\top
\end{align}
Thus, $T_1$ is unbiased at convergence when the first-order score estimator is exact ($\delta=0$).

The variance of our estimator $T_1$ is
\begin{align}
    &E_{\bar{x}}(T_1 - E_{\bar{x}}(T_1))(T_1 - E_{\bar{x}}(T_1))^\top \\
      &= Cov(H(\bar{x})) - E_{\bar{x}} \left(s(\bar{x})\delta^\top H(\bar{x})^\top\right) - E_{\bar{x}} \left(H(\bar{x})\delta s(\bar{x})^\top\right) + E(s(\bar{x})\delta^\top\delta s(\bar{x})^\top)
\end{align}

\subsubsection{Estimator from \citet{lu2022maximum}}
Given that the error of the current estimator $s(\bar{x}; \theta)$  is $\delta = s(\bar{x}; \theta) - s(\bar{x})$,
\begin{align}
    T_2(\theta) &= E_{x|\bar{x}} \left[(s(\bar{x}|x) - s(\bar{x}; \theta)(s(\bar{x}|x) - s(\bar{x}; \theta)^\top + H(\bar{x}|x) \right] \\
    &= E_{x|\bar{x}} \left[(s(\bar{x}|x) - s(\bar{x}) - \delta)(s(\bar{x}|x) - s(\bar{x}) - \delta)^\top + H(\bar{x}|x) \right] \\
    &= H(\bar{x}) + \delta\delta^\top
\end{align}

The expectation is 
\begin{align}
    E_{\bar{x}} (T_2(\theta^{(t)})) &= E_{\bar{x}}(H(\bar{x})) + \delta\delta^\top
\end{align}
So the estimator is biased when the score function is biased.

The variance is 
\begin{align}
    & E_{\bar{x}}(T_2 - E_{\bar{x}}(T_2))(T_2 - E_{\bar{x}}(T_2))^\top \\
    & = Cov(H(\bar{x}))
\end{align}
The variance is small.


    

\subsubsection{Estimator from \citet{meng2021estimating}}
Given that the error of the current estimator $s(\bar{x}; \theta)$ from iteration $t$ is $\delta = s(\bar{x}; \theta) - s(\bar{x})$,
\begin{align}
    T_3(\theta) &= E_{x| \bar{x}} \left[s(\bar{x}|x)s(\bar{x}|x)^\top - s(\bar{x}; \theta) s(\bar{x}; \theta)^\top + H(\bar{x}|x)\right] \\
    &= E_{x| \bar{x}} \left[s(\bar{x}|x)s(\bar{x}|x)^\top - s(\bar{x})s(\bar{x}) + H(\bar{x}|x) - \delta s(\bar{x})^\top - s(\bar{x})\delta^\top -\delta\delta^\top\right] \\
    &= H(\bar{x}) - \delta\delta^\top - \delta s(\bar{x})^\top - s(\bar{x})\delta^\top
\end{align}
The expectation is
\begin{align}
    E_{\bar{x}}(T_3(\theta^{(t)})) &= E_{\bar{x}}\left[ H(\bar{x}) - \delta\delta^\top - \delta s(\bar{x})^\top - s(\bar{x})\delta^\top\right] \\
    &= E_{\bar{x}}\left[ H(\bar{x})\right] - \delta\delta^\top
\end{align}
The variance is 
\begin{align}
    & E_{\bar{x}}(T_3 - E_{\bar{x}}(T_3))(T_3 - E_{\bar{x}}(T_3))^\top \\
    &= Cov(H(\bar{x})) - E_{\bar{x}} \left[H(\bar{x})s(\bar{x})\delta^\top\right] - E_{\bar{x}}\left[H(\bar{x})\delta s(\bar{x})^\top\right] - E_{\bar{x}} \left[\delta s(\bar{x})^\top H(\bar{x})^\top \right] - \notag \\
    & E_{\bar{x}} \left[ s(\bar{x})\delta^\top H(\bar{x})^\top  \right] + \delta E_{\bar{x}}[s(\bar{x})^\top s(\bar{x})] \delta^\top - E_{\bar{x}} [\delta s(\bar{x})^\top \delta s(\bar{x})^\top] - \notag \\
    & E_{\bar{x}} [s(\bar{x})\delta^\top s(\bar{x})\delta^\top] + E_{\bar{x}} [s(\bar{x})\delta^\top\delta s(\bar{x})^\top]
\end{align}

\begin{remark}
From the perspective of bias, given that the first-order score function $s_1(x_t)$ is uncorrelated with error, our estimator $T_1$ is unbiased when the first-order score error vanishes. $T_2$ is biased positively up to the quadratic form of the error. $T_3$ is biased negatively up to the quadratic form of the error. 

From the perspective of variances, given that the first-order score function $s_1(x_t)$ is uncorrelated with error, $T_2$ is the estimator with minimal variance.
\end{remark}

\section{Training Details}
For all the experiments, we use an encoder/decoder structure from~\cite{lu2022maximum} with 2 hidden layers of 128 hidden units in the encoder and 2 hidden layers of 256 hidden units in the decoder. The optimizer is Adam. The total number of iterations is 250k. The code is provided in the supplementary.

\vfill



\newpage
\section*{NeurIPS Paper Checklist}

The checklist is designed to encourage best practices for responsible machine learning research, addressing issues of reproducibility, transparency, research ethics, and societal impact. Do not remove the checklist: {\bf The papers not including the checklist will be desk rejected.} The checklist should follow the references and follow the (optional) supplemental material.  The checklist does NOT count towards the page
limit. 

Please read the checklist guidelines carefully for information on how to answer these questions. For each question in the checklist:
\begin{itemize}
    \item You should answer \answerYes{}, \answerNo{}, or \answerNA{}.
    \item \answerNA{} means either that the question is Not Applicable for that particular paper or the relevant information is Not Available.
    \item Please provide a short (1--2 sentence) justification right after your answer (even for \answerNA). 
\end{itemize}

{\bf The checklist answers are an integral part of your paper submission.} They are visible to the reviewers, area chairs, senior area chairs, and ethics reviewers. You will also be asked to include it (after eventual revisions) with the final version of your paper, and its final version will be published with the paper.

The reviewers of your paper will be asked to use the checklist as one of the factors in their evaluation. While \answerYes{} is generally preferable to \answerNo{}, it is perfectly acceptable to answer \answerNo{} provided a proper justification is given (e.g., error bars are not reported because it would be too computationally expensive'' or ``we were unable to find the license for the dataset we used''). In general, answering \answerNo{} or \answerNA{} is not grounds for rejection. While the questions are phrased in a binary way, we acknowledge that the true answer is often more nuanced, so please just use your best judgment and write a justification to elaborate. All supporting evidence can appear either in the main paper or the supplemental material, provided in appendix. If you answer \answerYes{} to a question, in the justification please point to the section(s) where related material for the question can be found.

IMPORTANT, please:
\begin{itemize}
    \item {\bf Delete this instruction block, but keep the section heading ``NeurIPS Paper Checklist"},
    \item  {\bf Keep the checklist subsection headings, questions/answers and guidelines below.}
    \item {\bf Do not modify the questions and only use the provided macros for your answers}.
\end{itemize}


\begin{enumerate}

\item {\bf Claims}
    \item[] Question: Do the main claims made in the abstract and introduction accurately reflect the paper's contributions and scope?
    \item[] Answer: \answerYes{}
    \item[] Justification: We theoretically derive the optimal weighting function for denoising diffusion
loss functions and discuss the tradeoff between statistical optimality and optimization dynamics.
    \item[] Guidelines:
    \begin{itemize}
        \item The answer \answerNA{} means that the abstract and introduction do not include the claims made in the paper.
        \item The abstract and/or introduction should clearly state the claims made, including the contributions made in the paper and important assumptions and limitations. A \answerNo{} or \answerNA{} answer to this question will not be perceived well by the reviewers. 
        \item The claims made should match theoretical and experimental results, and reflect how much the results can be expected to generalize to other settings. 
        \item It is fine to include aspirational goals as motivation as long as it is clear that these goals are not attained by the paper. 
    \end{itemize}

\item {\bf Limitations}
    \item[] Question: Does the paper discuss the limitations of the work performed by the authors?
    \item[] Answer: \answerYes{}
    \item[] Justification: Our limitations section points to assumptions and requirements for the optimal
weighting function to be computed
    \item[] Guidelines:
    \begin{itemize}
        \item The answer \answerNA{} means that the paper has no limitation while the answer \answerNo{} means that the paper has limitations, but those are not discussed in the paper. 
        \item The authors are encouraged to create a separate ``Limitations'' section in their paper.
        \item The paper should point out any strong assumptions and how robust the results are to violations of these assumptions (e.g., independence assumptions, noiseless settings, model well-specification, asymptotic approximations only holding locally). The authors should reflect on how these assumptions might be violated in practice and what the implications would be.
        \item The authors should reflect on the scope of the claims made, e.g., if the approach was only tested on a few datasets or with a few runs. In general, empirical results often depend on implicit assumptions, which should be articulated.
        \item The authors should reflect on the factors that influence the performance of the approach. For example, a facial recognition algorithm may perform poorly when image resolution is low or images are taken in low lighting. Or a speech-to-text system might not be used reliably to provide closed captions for online lectures because it fails to handle technical jargon.
        \item The authors should discuss the computational efficiency of the proposed algorithms and how they scale with dataset size.
        \item If applicable, the authors should discuss possible limitations of their approach to address problems of privacy and fairness.
        \item While the authors might fear that complete honesty about limitations might be used by reviewers as grounds for rejection, a worse outcome might be that reviewers discover limitations that aren't acknowledged in the paper. The authors should use their best judgment and recognize that individual actions in favor of transparency play an important role in developing norms that preserve the integrity of the community. Reviewers will be specifically instructed to not penalize honesty concerning limitations.
    \end{itemize}

\item {\bf Theory assumptions and proofs}
    \item[] Question: For each theoretical result, does the paper provide the full set of assumptions and a complete (and correct) proof?
    \item[] Answer: \answerYes{}
    \item[] Justification: All assumptions are clearly stated, theorems and lemmas are numbered and
cross references
    \item[] Guidelines:
    \begin{itemize}
        \item The answer \answerNA{} means that the paper does not include theoretical results. 
        \item All the theorems, formulas, and proofs in the paper should be numbered and cross-referenced.
        \item All assumptions should be clearly stated or referenced in the statement of any theorems.
        \item The proofs can either appear in the main paper or the supplemental material, but if they appear in the supplemental material, the authors are encouraged to provide a short proof sketch to provide intuition. 
        \item Inversely, any informal proof provided in the core of the paper should be complemented by formal proofs provided in appendix or supplemental material.
        \item Theorems and Lemmas that the proof relies upon should be properly referenced. 
    \end{itemize}

    \item {\bf Experimental result reproducibility}
    \item[] Question: Does the paper fully disclose all the information needed to reproduce the main experimental results of the paper to the extent that it affects the main claims and/or conclusions of the paper (regardless of whether the code and data are provided or not)?
    \item[] Answer: \answerYes{}
    \item[] Justification: Our paper is predominantly theoretical, but the small number empirical
experiments included are easily reproduced following the guidelines in the supplementary material.
    \item[] Guidelines:
    \begin{itemize}
        \item The answer \answerNA{} means that the paper does not include experiments.
        \item If the paper includes experiments, a \answerNo{} answer to this question will not be perceived well by the reviewers: Making the paper reproducible is important, regardless of whether the code and data are provided or not.
        \item If the contribution is a dataset and\slash or model, the authors should describe the steps taken to make their results reproducible or verifiable. 
        \item Depending on the contribution, reproducibility can be accomplished in various ways. For example, if the contribution is a novel architecture, describing the architecture fully might suffice, or if the contribution is a specific model and empirical evaluation, it may be necessary to either make it possible for others to replicate the model with the same dataset, or provide access to the model. In general. releasing code and data is often one good way to accomplish this, but reproducibility can also be provided via detailed instructions for how to replicate the results, access to a hosted model (e.g., in the case of a large language model), releasing of a model checkpoint, or other means that are appropriate to the research performed.
        \item While NeurIPS does not require releasing code, the conference does require all submissions to provide some reasonable avenue for reproducibility, which may depend on the nature of the contribution. For example
        \begin{enumerate}
            \item If the contribution is primarily a new algorithm, the paper should make it clear how to reproduce that algorithm.
            \item If the contribution is primarily a new model architecture, the paper should describe the architecture clearly and fully.
            \item If the contribution is a new model (e.g., a large language model), then there should either be a way to access this model for reproducing the results or a way to reproduce the model (e.g., with an open-source dataset or instructions for how to construct the dataset).
            \item We recognize that reproducibility may be tricky in some cases, in which case authors are welcome to describe the particular way they provide for reproducibility. In the case of closed-source models, it may be that access to the model is limited in some way (e.g., to registered users), but it should be possible for other researchers to have some path to reproducing or verifying the results.
        \end{enumerate}
    \end{itemize}

\item {\bf Open access to data and code}
    \item[] Question: Does the paper provide open access to the data and code, with sufficient instructions to faithfully reproduce the main experimental results, as described in supplemental material?
    \item[] Answer: \answerYes{}.
    \item[] Justification: Our contributions are primarily theoretical, and instructions on reproducingempirical experiments included in the supplementary material.
    \item[] Guidelines:
    \begin{itemize}
        \item The answer \answerNA{} means that paper does not include experiments requiring code.
        \item Please see the NeurIPS code and data submission guidelines (\url{https://neurips.cc/public/guides/CodeSubmissionPolicy}) for more details.
        \item While we encourage the release of code and data, we understand that this might not be possible, so \answerNo{} is an acceptable answer. Papers cannot be rejected simply for not including code, unless this is central to the contribution (e.g., for a new open-source benchmark).
        \item The instructions should contain the exact command and environment needed to run to reproduce the results. See the NeurIPS code and data submission guidelines (\url{https://neurips.cc/public/guides/CodeSubmissionPolicy}) for more details.
        \item The authors should provide instructions on data access and preparation, including how to access the raw data, preprocessed data, intermediate data, and generated data, etc.
        \item The authors should provide scripts to reproduce all experimental results for the new proposed method and baselines. If only a subset of experiments are reproducible, they should state which ones are omitted from the script and why.
        \item At submission time, to preserve anonymity, the authors should release anonymized versions (if applicable).
        \item Providing as much information as possible in supplemental material (appended to the paper) is recommended, but including URLs to data and code is permitted.
    \end{itemize}

\item {\bf Experimental setting/details}
    \item[] Question: Does the paper specify all the training and test details (e.g., data splits, hyperparameters, how they were chosen, type of optimizer) necessary to understand the results?
    \item[] Answer: \answerYes{}.
    \item[] Justification: The supplementary material includes details on experiments supporting our theory.
    \item[] Guidelines:
    \begin{itemize}
        \item The answer \answerNA{} means that the paper does not include experiments.
        \item The experimental setting should be presented in the core of the paper to a level of detail that is necessary to appreciate the results and make sense of them.
        \item The full details can be provided either with the code, in appendix, or as supplemental material.
    \end{itemize}

\item {\bf Experiment statistical significance}
    \item[] Question: Does the paper report error bars suitably and correctly defined or other appropriate information about the statistical significance of the experiments?
    \item[] Answer:\answerYes{}
    \item[] Justification: Our evaluation is reproduced multiple times and reported error bars where suitable.
    \item[] Guidelines:
    \begin{itemize}
        \item The answer \answerNA{} means that the paper does not include experiments.
        \item The authors should answer \answerYes{} if the results are accompanied by error bars, confidence intervals, or statistical significance tests, at least for the experiments that support the main claims of the paper.
        \item The factors of variability that the error bars are capturing should be clearly stated (for example, train/test split, initialization, random drawing of some parameter, or overall run with given experimental conditions).
        \item The method for calculating the error bars should be explained (closed form formula, call to a library function, bootstrap, etc.)
        \item The assumptions made should be given (e.g., Normally distributed errors).
        \item It should be clear whether the error bar is the standard deviation or the standard error of the mean.
        \item It is OK to report 1-sigma error bars, but one should state it. The authors should preferably report a 2-sigma error bar than state that they have a 96\% CI, if the hypothesis of Normality of errors is not verified.
        \item For asymmetric distributions, the authors should be careful not to show in tables or figures symmetric error bars that would yield results that are out of range (e.g., negative error rates).
        \item If error bars are reported in tables or plots, the authors should explain in the text how they were calculated and reference the corresponding figures or tables in the text.
    \end{itemize}

\item {\bf Experiments compute resources}
    \item[] Question: For each experiment, does the paper provide sufficient information on the computer resources (type of compute workers, memory, time of execution) needed to reproduce the experiments?
    \item[] Answer: \answerYes{} 
    \item[] Justification: We have detailed the information in the appendix.
    \item[] Guidelines:
    \begin{itemize}
        \item The answer \answerNA{} means that the paper does not include experiments.
        \item The paper should indicate the type of compute workers CPU or GPU, internal cluster, or cloud provider, including relevant memory and storage.
        \item The paper should provide the amount of compute required for each of the individual experimental runs as well as estimate the total compute. 
        \item The paper should disclose whether the full research project required more compute than the experiments reported in the paper (e.g., preliminary or failed experiments that didn't make it into the paper). 
    \end{itemize}
    
\item {\bf Code of ethics}
    \item[] Question: Does the research conducted in the paper conform, in every respect, with the NeurIPS Code of Ethics \url{https://neurips.cc/public/EthicsGuidelines}?
    \item[] Answer: \answerYes{} 
    \item[] Justification: We have read the ethics guidelines and believe that we comply with these.
    \item[] Guidelines:
    \begin{itemize}
        \item The answer \answerNA{} means that the authors have not reviewed the NeurIPS Code of Ethics.
        \item If the authors answer \answerNo, they should explain the special circumstances that require a deviation from the Code of Ethics.
        \item The authors should make sure to preserve anonymity (e.g., if there is a special consideration due to laws or regulations in their jurisdiction).
    \end{itemize}

\item {\bf Broader impacts}
    \item[] Question: Does the paper discuss both potential positive societal impacts and negative societal impacts of the work performed?
    \item[] Answer: \answerNA{} 
    \item[] Justification: Our paper provides a theoretical analysis of an existing approach used in score matching, and direct societal impact is limited.
    \item[] Guidelines:
    \begin{itemize}
        \item The answer \answerNA{} means that there is no societal impact of the work performed.
        \item If the authors answer \answerNA{} or \answerNo, they should explain why their work has no societal impact or why the paper does not address societal impact.
        \item Examples of negative societal impacts include potential malicious or unintended uses (e.g., disinformation, generating fake profiles, surveillance), fairness considerations (e.g., deployment of technologies that could make decisions that unfairly impact specific groups), privacy considerations, and security considerations.
        \item The conference expects that many papers will be foundational research and not tied to particular applications, let alone deployments. However, if there is a direct path to any negative applications, the authors should point it out. For example, it is legitimate to point out that an improvement in the quality of generative models could be used to generate Deepfakes for disinformation. On the other hand, it is not needed to point out that a generic algorithm for optimizing neural networks could enable people to train models that generate Deepfakes faster.
        \item The authors should consider possible harms that could arise when the technology is being used as intended and functioning correctly, harms that could arise when the technology is being used as intended but gives incorrect results, and harms following from (intentional or unintentional) misuse of the technology.
        \item If there are negative societal impacts, the authors could also discuss possible mitigation strategies (e.g., gated release of models, providing defenses in addition to attacks, mechanisms for monitoring misuse, mechanisms to monitor how a system learns from feedback over time, improving the efficiency and accessibility of ML).
    \end{itemize}
    
\item {\bf Safeguards}
    \item[] Question: Does the paper describe safeguards that have been put in place for responsible release of data or models that have a high risk for misuse (e.g., pre-trained language models, image generators, or scraped datasets)?
    \item[] Answer: \answerNA{} 
    \item[] Justification: Our paper is theoretical in nature, and safeguards are not required.
    \item[] Guidelines:
    \begin{itemize}
        \item The answer \answerNA{} means that the paper poses no such risks.
        \item Released models that have a high risk for misuse or dual-use should be released with necessary safeguards to allow for controlled use of the model, for example by requiring that users adhere to usage guidelines or restrictions to access the model or implementing safety filters. 
        \item Datasets that have been scraped from the Internet could pose safety risks. The authors should describe how they avoided releasing unsafe images.
        \item We recognize that providing effective safeguards is challenging, and many papers do not require this, but we encourage authors to take this into account and make a best faith effort.
    \end{itemize}

\item {\bf Licenses for existing assets}
    \item[] Question: Are the creators or original owners of assets (e.g., code, data, models), used in the paper, properly credited and are the license and terms of use explicitly mentioned and properly respected?
    \item[] Answer: \answerNA{} 
    \item[] Justification: No existing assets are used.
    \item[] Guidelines:
    \begin{itemize}
        \item The answer \answerNA{} means that the paper does not use existing assets.
        \item The authors should cite the original paper that produced the code package or dataset.
        \item The authors should state which version of the asset is used and, if possible, include a URL.
        \item The name of the license (e.g., CC-BY 4.0) should be included for each asset.
        \item For scraped data from a particular source (e.g., website), the copyright and terms of service of that source should be provided.
        \item If assets are released, the license, copyright information, and terms of use in the package should be provided. For popular datasets, \url{paperswithcode.com/datasets} has curated licenses for some datasets. Their licensing guide can help determine the license of a dataset.
        \item For existing datasets that are re-packaged, both the original license and the license of the derived asset (if it has changed) should be provided.
        \item If this information is not available online, the authors are encouraged to reach out to the asset's creators.
    \end{itemize}

\item {\bf New assets}
    \item[] Question: Are new assets introduced in the paper well documented and is the documentation provided alongside the assets?
    \item[] Answer: \answerNA{} 
    \item[] Justification: No new assets are released.
    \item[] Guidelines:
    \begin{itemize}
        \item The answer \answerNA{} means that the paper does not release new assets.
        \item Researchers should communicate the details of the dataset\slash code\slash model as part of their submissions via structured templates. This includes details about training, license, limitations, etc. 
        \item The paper should discuss whether and how consent was obtained from people whose asset is used.
        \item At submission time, remember to anonymize your assets (if applicable). You can either create an anonymized URL or include an anonymized zip file.
    \end{itemize}

\item {\bf Crowdsourcing and research with human subjects}
    \item[] Question: For crowdsourcing experiments and research with human subjects, does the paper include the full text of instructions given to participants and screenshots, if applicable, as well as details about compensation (if any)? 
    \item[] Answer: \answerNA{} 
    \item[] Justification: No crowd-sourcing/ human research was conducted.
    \item[] Guidelines:
    \begin{itemize}
        \item The answer \answerNA{} means that the paper does not involve crowdsourcing nor research with human subjects.
        \item Including this information in the supplemental material is fine, but if the main contribution of the paper involves human subjects, then as much detail as possible should be included in the main paper. 
        \item According to the NeurIPS Code of Ethics, workers involved in data collection, curation, or other labor should be paid at least the minimum wage in the country of the data collector. 
    \end{itemize}

\item {\bf Institutional review board (IRB) approvals or equivalent for research with human subjects}
    \item[] Question: Does the paper describe potential risks incurred by study participants, whether such risks were disclosed to the subjects, and whether Institutional Review Board (IRB) approvals (or an equivalent approval/review based on the requirements of your country or institution) were obtained?
    \item[] Answer: \answerNA{} 
    \item[] Justification: No human subjects research occurred in this work.
    \item[] Guidelines: 
    \begin{itemize}
        \item The answer \answerNA{} means that the paper does not involve crowdsourcing nor research with human subjects.
        \item Depending on the country in which research is conducted, IRB approval (or equivalent) may be required for any human subjects research. If you obtained IRB approval, you should clearly state this in the paper. 
        \item We recognize that the procedures for this may vary significantly between institutions and locations, and we expect authors to adhere to the NeurIPS Code of Ethics and the guidelines for their institution. 
        \item For initial submissions, do not include any information that would break anonymity (if applicable), such as the institution conducting the review.
    \end{itemize}

\item {\bf Declaration of LLM usage}
    \item[] Question: Does the paper describe the usage of LLMs if it is an important, original, or non-standard component of the core methods in this research? Note that if the LLM is used only for writing, editing, or formatting purposes and does \emph{not} impact the core methodology, scientific rigor, or originality of the research, declaration is not required.
    \item[] Answer: \answerNA{} 
    \item[] Justification: LLM is not part of our core method.
    \item[] Guidelines:
    \begin{itemize}
        \item The answer \answerNA{} means that the core method development in this research does not involve LLMs as any important, original, or non-standard components.
        \item Please refer to our LLM policy in the NeurIPS handbook for what should or should not be described.
    \end{itemize}

\end{enumerate}

\end{document}